\documentclass{article}

\usepackage{arxiv}

\usepackage[utf8]{inputenc} % allow utf-8 input
\usepackage[T1]{fontenc}    % use 8-bit T1 fonts
\usepackage{hyperref}       % hyperlinks
\usepackage{url}            % simple URL typesetting
\usepackage{booktabs}       % professional-quality tables
\usepackage{amsfonts}       % blackboard math symbols
\usepackage{nicefrac}       % compact symbols for 1/2, etc.
\usepackage{microtype}      % microtypography
\usepackage{lipsum}		% Can be removed after putting your text content
\usepackage{algorithm}
\usepackage{algorithmic}
\usepackage{subcaption} % for subfigure environment
\usepackage{graphicx}
\usepackage{natbib}
\usepackage{doi}
\usepackage{booktabs}
\usepackage{amsthm}
\usepackage{amsmath}
\usepackage{amsfonts}
\usepackage{subcaption}
\usepackage{xr}
\usepackage{float}

\title{GaP-MoE : Explicit Expert Specialization in Mixture-of-Experts via Orthogonal Gradient Accumulation and Projection}

\author{ {\hspace{1mm}Donghyun~Jeong} \\
	Department of Statistics and Data Science\\
	University of Seoul\\
	Republic of Korea\\
	\texttt{dandong00@uos.ac.kr} \\
	\And
	{\hspace{1mm}Yoon-Yeong~Kim}\thanks{Corresponding Author} \\
	Department of Statistics\\
	University of Seoul\\
	Republic of Korea\\
	\texttt{yykim@uos.ac.kr} \\
}

\renewcommand{\shorttitle}{GaP-MoE}

\hypersetup{
pdftitle={GaP-MoE},
pdfsubject={q-bio.NC, q-bio.QM},
pdfauthor={Donghyun Jeong, Yoon-Yeong Kim},
pdfkeywords={Mixture-of-Experts, Expert Specializatoin, Representation Collapse},
}

\begin{document}
\maketitle
\setcounter{footnote}{0}

\begin{abstract}
Sparse Mixture-of-Experts (SMoE) models improve scalability by routing input tokens to only a subset of expert networks, but they often suffer from representation collapse, in which multiple experts converge toward similar functions and lose their intended diversity. Prior work has addressed this limitation either by improving router architectures or by introducing auxiliary loss terms that implicitly penalize similarity among experts. However, router-based approaches remain vulnerable to routing collapses, and loss-based approaches can conflict with the task's original objective during optimization. In this work, we propose GaP-MoE (Gradient Accumulation and Projection in MoE), which induces expert specialization by directly manipulating expert output gradients rather than modifying the loss function. At each training step, GaP-MoE stochastically selects one of the $TopK$ routed experts as a leader based on normalized routing scores, and projects the gradients of the remaining follower experts onto the orthogonal complement of the subspace spanned by the leader's recent gradients. This removes directional overlap between expert gradients without altering the task loss, allowing specialization to emerge as a direct consequence of the optimization dynamics rather than an externally imposed constraint. We evaluate GaP-MoE in both fine-tuning settings (BERT-based, on GLUE) and from-scratch pretraining settings (GPT-2-based, on WikiText-103, enwik8, and text8), and further confirm its compatibility as a drop-in module for existing SMoE architectures. Experimental results show that GaP-MoE achieves performance comparable to or better than standard SMoE.
\end{abstract}

% keywords can be removed
\keywords{Mixture-of-Experts \and Expert Specialization \and Representation Collapse}

\section{Introduction}
With the development of Transformer-based models \citep{trans}, numerous deep learning models have made remarkable progress across various tasks in recent years. The simplest way to improve the performance of deep learning models for each task is to follow the so-called “scaling law,” which involves increasing both the model size and the dataset size \citep{scaling01, gpt3}. However, this also implies an increase in computational cost.

A commonly used approach to address this issue is the Sparse Mixture-of-Experts (SMoE) \citep{shazeer2017}. Instead of a single neural network, SMoE uses multiple independent networks called \textit{experts} to increase the overall model capacity \citep{MoE}, and applies \textit{$TopK$ routing}, which selects the $K$ experts best suited to process each input token. Each selected expert processes the token using its own parameters, and the router combines their outputs as a weighted sum using the corresponding routing scores, which were originally used for \textit{$TopK$ routing}.

However, SMoE does not inherently guarantee that different experts learn distinct functions. For example, if the router collapses, tokens may be assigned disproportionately to only a small subset of experts or distributed without meaningful expert preference, preventing the experts from receiving sufficiently differentiated training signals \citep{krishnamurthy2023improving}. Even when the router does not collapse, experts may remain insufficiently specialized if they observe similar inputs or evolve from similar initialization conditions \citep{dai2024deepseekmoe}.
Therefore, regardless of whether insufficient specialization originates from the routing process or the optimization dynamics of the experts, encouraging expert specialization remains important for fully utilizing the capacity of SMoE.

Consequently, various studies have been conducted to promote expert specialization. Some approaches identify low-dimensional degeneration of expert representations as a cause of insufficient specialization and encourage low-dimensional operations at the router level \citep{chi2022representation}.
Other approaches dynamically adjust the number of experts used for routing \citep{pham2024competesmoe}, efficiently measure and regulate parameter similarity \citep{do2025simsmoe}, or directly orthogonalize token output representations \citep{liu2024diversifying}.
More recently, this line of research has extended to gradient regulation, where auxiliary loss functions are introduced to prevent gradient conflicts among tokens within the same expert \citep{yang2025solving, guo2026advancing}.

However, existing specialization methods have potential drawbacks. Router-based methods primarily improve routing decisions and encourage specialization indirectly by modifying the geometry or distribution of token assignments, rather than directly constraining how the selected experts learn their representations \citep{wang2026myth}.
Methods that regulate expert parameters or representations through auxiliary loss functions can more directly induce specialization. However, such auxiliary objectives may conflict with the original task loss used to train the model parameters, potentially hindering both learning processes \citep{wang2024auxiliary, guo2026advancing}.

Accordingly, we propose Gradient Accumulation and Projection in Mixture-of-Experts, or \textbf{GaP-MoE}, a learning algorithm that retains the standard training procedure of SMoE while explicitly applying orthogonality constraints to already-computed expert output gradients.
Rather than introducing an additional loss term, GaP-MoE directly modifies redundant gradient directions after the task gradients have been generated, thereby preserving the original task objective while promoting expert specialization.

Specifically, we store the recent output gradients received by each expert as an orthonormal basis using the Gram–Schmidt process. Among the $TopK$ selected experts, one expert is probabilistically designated as a leader according to the normalized routing scores, while the remaining experts are treated as followers. The leader retains its original task gradient, whereas each follower removes the component of its output gradient that overlaps with the subspace spanned by the leader’s accumulated gradients. In this way, GaP-MoE encourages co-routed experts to follow differentiated update directions without requiring additional forward or backward passes.
% To the best of our knowledge, no method exists that directly enforces the orthogonality of each expert’s output gradients.
% In this study, we store the gradients received by each expert in the form of an orthonormal basis using the Gram-Schmidt method, and simultaneously achieve orthogonal training and expert specialization by probabilistically selecting experts with relatively low expertise from the $TopK$ and removing their gradients.
The key benefits of GaP-MoE are as follows.
\begin{itemize}
\item Explicitly induces expert specialization through gradient orthogonalization.
\item Reuses already computed expert output gradients, avoiding additional forward/backward passes and gradient recomputation.
\item Its simple structure ensures compatibility with various existing router architectures and auxiliary loss functions.
\end{itemize}

\section{Related Works}

\subsection{Sparse Mixture-of-Experts}
Following the conventional SMoE formulation \citep{shazeer2017}, a Mixture-of-Experts module consists of a set of $N$ neural networks, denoted as $E_1,\ldots, E_N$, each with independent parameters. It is generally used to replace the FFN layer in Transformer models, in which each expert is typically implemented as a two-layer MLP \citep{lepikhin2020gshard, fedus2022switch}
\begin{equation}
y_i(x) = W^2_i(\sigma(W^1_i x + b^1_i)) + b^2_i, i \in {1,\ldots, N} \notag
\end{equation}
where $x$ is the input token, and $(W^1_i, W^2_i)$ and $(b^1_i, b^2_i)$ denote the weights and biases of the $i$-th expert. However, this form is not fixed, and various types of experts and corresponding architectures are being studied \citep{zhang2023robust,zhang2022mixture}.
SMoE additionally employs a gating network, or router, $G(x)$ that computes a routing score for each expert w.r.t. token $x$. Based on these scores, the router selects the $K$ highest-scoring experts and retains their scores, while suppressing the scores of all unselected experts to zero. %\cite{jiang2024mixtral,dai2024deepseekmoe,yang2025qwen3}).
% Accordingly, $G(x)_i>0$ if the $i$-th expert is included in the $TopK$ set, and $G(x)_i=0$ otherwise.
% \begin{equation}
% G(x)_i = TopK(\mathrm{SoftMax}((W_g x)_i), K) \notag
% \end{equation}
% Here, $W_g$ is the routing parameter matrix, and $K$ is a hyperparameter.
% The router determines how well each expert can process the input token $x$. It learns the routing parameters to output logits and uses the $TopK$ operation to decide which expert to assign the token to (in more recent open-weight MoE LLMs, the $TopK$ operation is commonly applied before softmax \cite{jiang2024mixtral,dai2024deepseekmoe,yang2025qwen3}). %Next, the routing score $G(x)_i$ for the selected $i$-th expert is computed using softmax.
The token is then passed to the selected expert, and their outputs are combined using the corresponding routing scores as  %where a forward pass is performed on each neural network, and the results are linearly combined using the respective routing scores $G(x)_i$ that were previously computed.
\begin{equation}
SMoE(x) = \sum_{i=1}^{N}G_i(x) y_i (x) \label{eq_moe}
\end{equation}
Since $G_i(x) = 0$ for experts not included in the $TopK$, only a subset of experts is activated for each token, thereby achieving sparse computation.

\subsection{Expert Specialization and Representation Collapse}

% \textit{Representation collapse}, in which hidden representations degenerate into linear combinations of expert embeddings and become confined to a low-dimensional subspace, was first identified in \cite{chi2022representation} as a major obstacle to learning distributed expert representations in MoE models.

\citet{chi2022representation} describe representation collapse as the degeneration of representations during neural network training and investigate this phenomenon in sparse Mixture-of-Experts (SMoE) models. Their analysis suggests that routing-induced updates can constrain token hidden representations toward a low-dimensional subspace spanned by expert embeddings and draw representations of tokens routed to the same expert closer together, potentially limiting their expressive capacity.

Subsequent studies have used the term \textit{representation collapse} more broadly to describe several forms of insufficient differentiation among experts. These include cases in which multiple experts learn similar representations, thereby limiting specialization, and cases in which the router assigns most tokens to only a small subset of experts  \citep{pham2024competesmoe,do2025simsmoe,do2023hyperrouter}.
Research that aims to make individual experts learn distinct representations or functions is generally studied under the notion of \textit{expert specialization}  \citep{krishnamurthy2023improving,dai2024deepseekmoe}.

Recent studies have employed various approaches to promote expert specialization and alleviate representation collapse. A prominent example is router-based methods, which prevent tokens from being concentrated on specific experts and encourage more differentiated routing decisions. XMoE  \citep{chi2022representation} maps routing representations to a low-dimensional space to mitigate representational degeneration.
 \citep{nguyen2025statistical} further analyzed how perturbing a cosine router, which has recently emerged as an alternative to the conventional linear router, can help prevent representation collapse. 

Other studies address representation collapse by directly regulating expert weights, outputs, or gradients. \citet{guo2026advancing} promote expert specialization using a loss that orthogonalizes expert parameters, alongside a loss that promotes more diverse routing. SimSMoE \citep{do2025simsmoe} induces differentiation by reducing the similarity between the output representations produced by different experts. SD-MoE \citep{huang2026sd} decomposes expert and gradient matrices via SVD into dominant and expert-specific components, facilitating the learning of an expert-specific subspace.

However, existing methods encourage expert outputs to explore diverse subspaces either implicitly through routing mechanisms or explicitly through additional loss functions. %either do not explicitly encourage each expert's output to explore diverse subspaces or attempt to achieve this indirectly through the loss function.
In contrast, our method, GaP-MoE, directly differentiates the update directions of expert outputs by projecting follower gradients away from the recent gradient subspace of a dynamically selected leader. This promotes expert specialization without introducing an additional optimization objective, while preserving the original task gradient of the selected leader.
% experts to learn more diverse representations through gradient differentiation of expert outputs, while simultaneously probabilistically selecting the expert best suited to process a given token—a process that does not interfere with that expert’s learning.
\begin{figure*}[t]
    \centering

    \begin{subfigure}{0.495\linewidth}
        \centering
        \includegraphics[width=1\linewidth]{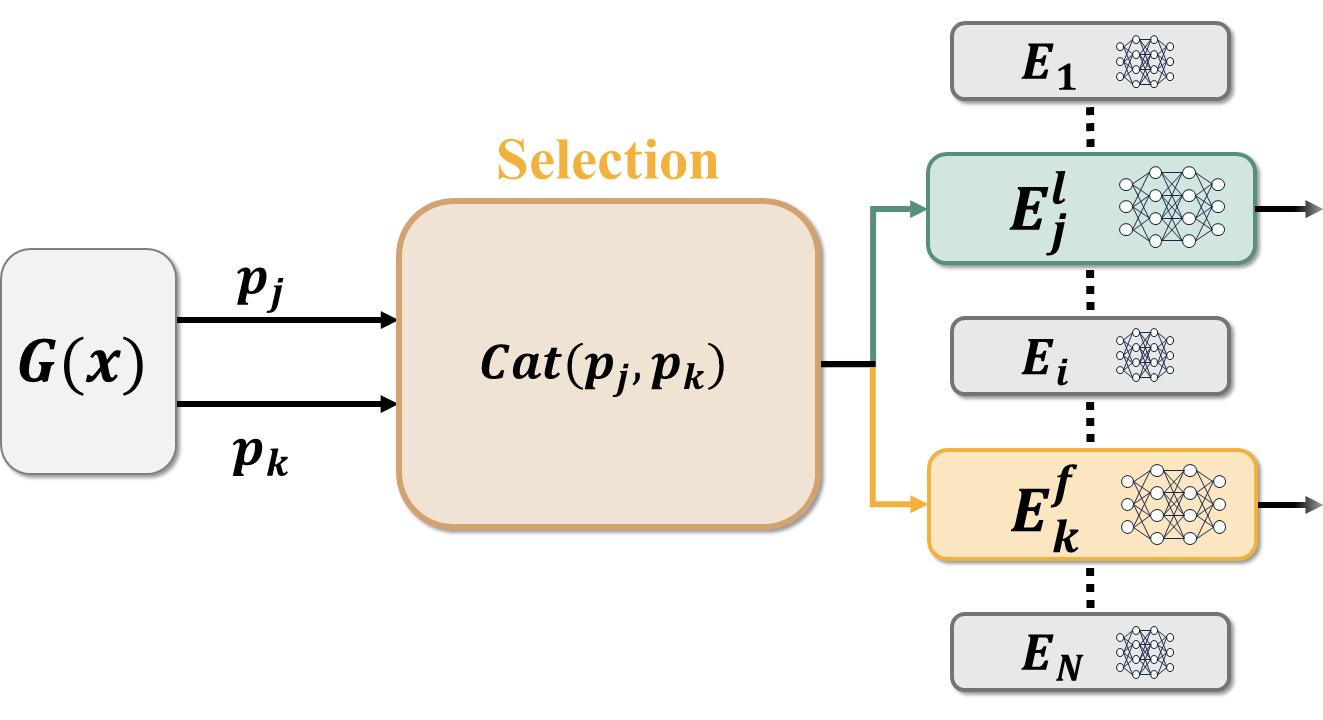}
        \caption{Dynamic Leader Selection}
        \label{fig:dyn}
    \end{subfigure}
    \begin{subfigure}{0.495\linewidth}
        \centering
        \includegraphics[width=1\linewidth]{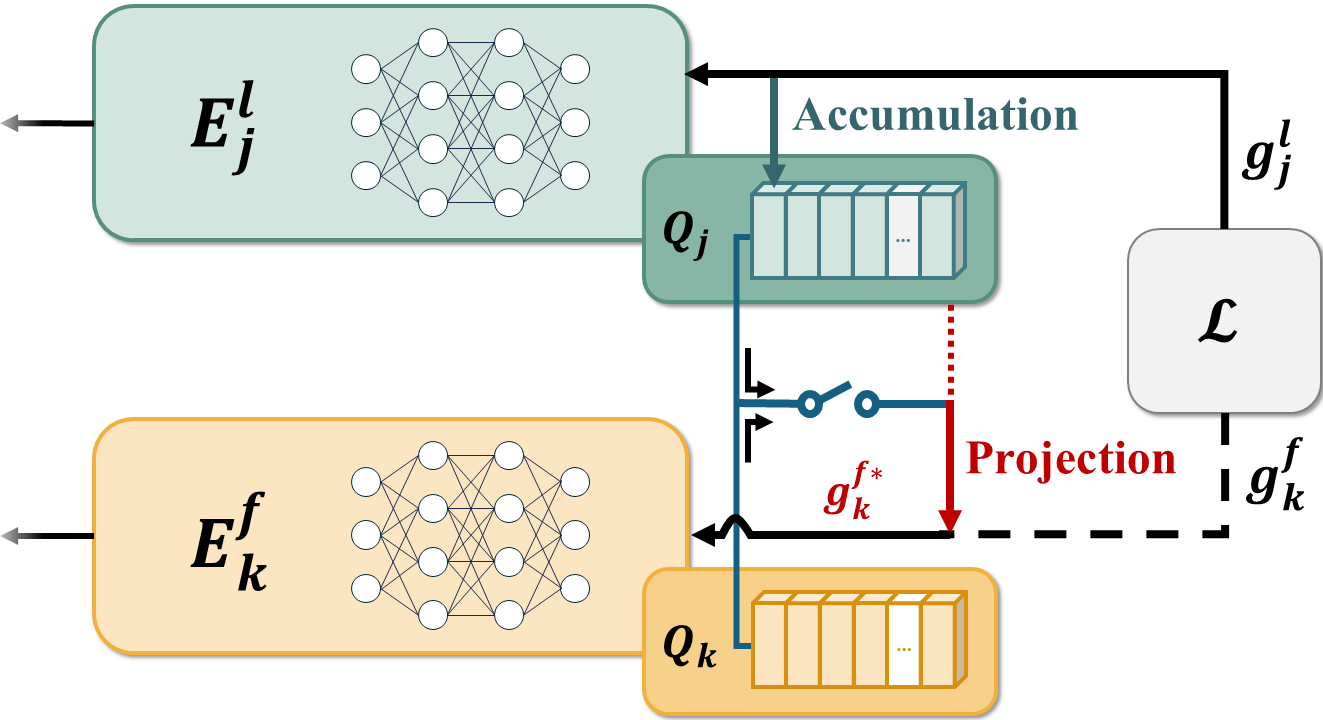}
        \caption{Gradient Accumulation and Projection}
        \label{fig:proj}
    \end{subfigure}
    \caption{(a) GaP-MoE selects a leader expert from the $TopK$ set using normalized routing probabilities $p_i$. (b) The aggregated output gradients of the leader $E_j^l$ are stored in $Q_j$ as orthonormal basis vectors. When the similarity between their most recent basis vectors exceeds the threshold $\gamma$, the follower gradient is projected onto the orthogonal complement of the leader subspace and used to update $E_k^f$.}%by using probabilities $p_i$ which were normalized from each routing logit $G(x)_i$ as the probability of being selected as the leader. The aggregation of output gradients $g_i^l, g_j^f$ calculated from the loss $\mathcal{L}$ for $E_i^l$ (leader) and $E_j^f$ (follower) is stored in the buffer queues $Q_i, Q_j$ in the form of orthonormal bases. If the cosine similarity between the most recent basis vectors in $Q_ij$ and $Q_j$ exceeds the threshold $\gamma$ (The blue line), the projection is activated(The red line), and the projected follower gradient $g_j^{f*}$ is used to update $E^f_j$.}
    \label{fig:fig1}
\end{figure*}

\begin{algorithm}[t]
\caption{GaP-MoE (short version)}
\label{alg:algorithm_short}
\textbf{Input}: mini-batch tokens $\{x_n\}$; router $G(\cdot)$ \\
\textbf{Parameter}: experts $\{\theta_i\}$; buffers $\{Q_i\}$
\begin{algorithmic}[1]
\FOR{each token $x_n$}
    \STATE Route $x_n$ to top-2 experts; sample leader $l$
    \STATE Compute expert outputs and weighted MoE output
\ENDFOR
\STATE Backpropagate to obtain gradients $g_n$ for all tokens
\STATE \textbf{Projection}: for each follower $E_k^f$, project $g_k^f$ using leader buffer $Q_j$ if $\mathrm{cossim}(b_j^{(r)}, b_k^{(r)}) > \gamma$
\STATE \textbf{Accumulation}: for each expert $i$, average its leader gradients and update $Q_i$ via Gram--Schmidt
\STATE Update $\{\theta_i\}$ with (projected) gradients
\end{algorithmic}
\end{algorithm}

\section{Proposal of GaP-MoE}

For simplicity, this section describes the method based on $Top2$ routing. GaP-MoE consists of three main components: \textit{Dynamic Leader Selection}, \textit{Gradient Accumulation}, and \textit{Gradient Projection}. %The overall process is illustrated in Figure \ref{fig:fig1}, with the complete training procedure summarized in Algorithm \ref{alg:algorithm}. 
The overall process is illustrated in Figure \ref{fig:fig1}, with the training procedure summarized in Algorithm \ref{alg:algorithm_short}. The full algorithm is provided in Appendix~\ref{appedinx:full_algorithm}.

\subsection{Dynamic Leader Selection}

For each $d_{in}$-dimensional input token $x\in \mathbb{R}^{d_{in}}$, the router computes sparse routing scores $G_i(x)$ over $N$ experts, where $G_i(x)>0$ only for the $TopK$ selected experts. GaP-MoE normalizes these nonzero routing scores and uses them as the probabilities for selecting one leader expert as
\begin{equation}
p_i = \frac{G_i(x)}{\sum_{i=1}^{N}G_i(x)}, \quad l \sim \mathrm{Categorical}(p_1,\dots,p_N).
\end{equation}
If expert $E_j$ is sampled as the leader, denoted by $E_j^l$, its original task gradient is preserved without projection. Each remaining selected expert $E_k$, denoted by $E_k^f$, becomes a follower whose output gradient is subsequently differentiated from the leader's gradient subspace through the projection described in Section \ref{sec:projection}.
% The core structure of MoE is that the router calculates a routing score $G(x)_i, i \in [1, 2, \ldots, N]$, where $N$ refers to the total number of experts, based on the input token $x \in R^{d_{in}} $and information from each expert, and then computes a weighted sum of the outputs from each expert $E_i$ using that score. GaP-MoE normalizes the routing scores so that their sum equals 1, and then uses them as the probability that each expert will become a leader. 
% \begin{equation}
% p_i = \frac{G(x)_i}{\sum_{i=1}^{N}G(x)_i}, \notag
% \end{equation}

% \begin{equation}
% l \sim \mathrm{Categorical}(p_1,\dots,p_N)
% \end{equation}

% If the $i$-th expert $E_i$ is selected as the leader($E_i^l$), its gradient is protected. And the $j$-th Expert $E_j$, which is not selected as the leader, becomes the follower ($E_j^f$).

\subsection{Gradient Projection}\label{sec:projection}
Gradient projection modifies a gradient direction by removing its component along a specified subspace. It has been widely used to mitigate gradient interference in continual and multi-task learning \citep{yu2020gradient,lopez2017gradient}.
GaP-MoE adopts this principle to reduce overlapping update directions between co-routed experts, rather than between different tasks \citep{farajtabar2020orthogonal}.

% Gradient projection is a method that directly manipulates the direction of the gradient to avoid collisions or guide the model to learn in a specific target direction. It is primarily used in continual learning and multi-task learning that handle various tasks \cite{yu2020gradient,lopez2017gradient}. 

% Specifically, our method is inspired by the orthogonal gradient projection strategy of OGD \cite{farajtabar2020orthogonal}, but differs in that it performs projection between experts rather than across tasks. In the aforementioned research and in this field, gradient projection is primarily used to prevent performance degradation by adjusting the gradients of a new task to be orthogonal to the gradient updates of the existing task. However, in GaP-MoE, it is used to explicitly constrain each expert to learn different outputs for the same input token. 

For clarity, let the output of the $i$-th expert be $y_i = \theta_i x$, where $\theta_i \in \mathbb{R}^{H \times d_{in}}$ denotes the weight matrix of the $i$-th expert with $H$ denoting the output dimension. 
The gradient with respect to the expert output is %Then, in the loss function of the MoE architecture, the gradient with respect to the expert output is expressed as
\begin{equation}
g_i=\frac{\partial \mathcal{L}}{\partial y_i},
\end{equation}
where $\mathcal{L}$ denotes the task loss for the current token\footnote{Since the $i$-th expert can either be selected as leader or follower, we use upper indices $l$ and $f$ to discriminate the selection, i.e., $g_i^l$ and $g_i^f$, respectively.}.

For a leader-follower pair $(E_j^l, E_k^f)$, GaP-MoE determines whether projection is required by comparing their most recently stored gradient basis, {$(b_j^{(r)}, b_k^{(r)})$, from the gradient buffers $Q_j$ and $Q_k$} described in Section \ref{sec:accumulation}.
Specifically, projection is activated when
\begin{equation}
\mathrm{cossim}(b_j^{(r)}, b_k^{(r)}) = \frac{b_j^{(r)}\cdot b_k^{(r)}}{\left\| b_j^{(r)} \right\| \cdot \left\|b_k^{(r)} \right\|} > \gamma, \notag
\end{equation}
where $\gamma\in[-1,1]$ is a similarity threshold. Setting $\gamma=-1$ activates projection for every leader-follower pair.
% The decision to perform gradient projection is made based on the cosine similarity of the most recent(i.e. $r$-th) gradient information $(b_i^{l,(r)}, b_j^{f,(r)})$ in the buffers of the leader $(Q^l_i)$ and the follower $(Q^f_j)$, which will be explained in Section 3.3. This allows GaP-MoE to determine whether the currently selected expert combination $(E_i^l, E_j^f)$ is being updated in the same direction. For the projection threshold $\gamma \in [-1, 1]$ set as a hyperparameter, if 
% \begin{equation}
% \mathrm{cossim}(b_i^{l,(r)}, b_j^{f,(r)}) = \frac{b_i^{l,(r)}\cdot b_j^{f,(r)}}{\left\| b_i^{l,(r)} \right\| \cdot \left\|b_j^{f,(r)} \right\|} > \gamma, \notag
% \end{equation}
% projection is performed; when $\gamma = -1$, gradient projection is always performed.
When projection is activated, the follower output gradient is modified as
% If the similarity exceeds $\gamma$, the gradient projection 
\begin{equation}
g_k^{f*} = (I - \alpha  Q_j^\top Q_j)g_k^f,
\end{equation}
where $Q_j^\top Q_j$ projects onto the subspace spanned by the leader's buffered gradients, and $\alpha\in[0,1]$ controls the projection intensity. 
% is applied to the follower expert’s expert output  gradient $g_j^f$ individually, where $\alpha\in[0,1]$ is a projection intensity hyperparameter. Through this projection, the component of $g^f_j$ lying in the subspace spanned by the leader's buffered gradients is removed, allowing the follower to learn representations that diverge from the leader.
The projected gradient $g_k^{f*}$ is then used to update the follower $E_k^f$, while the leader $E_j^l$ retains its original task gradient $g_j^l$. This reduces redundant update directions and encourages the two experts to evolve toward differentiated output representations.
% After projection, the modified follower gradient $g_j^{f*}$ is used for the follower's subsequent parameter update. This projection allows the follower to learn representations that diverge from the leader.
% , which induces the experts output orthogonal.

\begin{figure*}[t]
    \centering

    \begin{subfigure}{0.495\linewidth}
        \centering
        \includegraphics[width=1\linewidth]{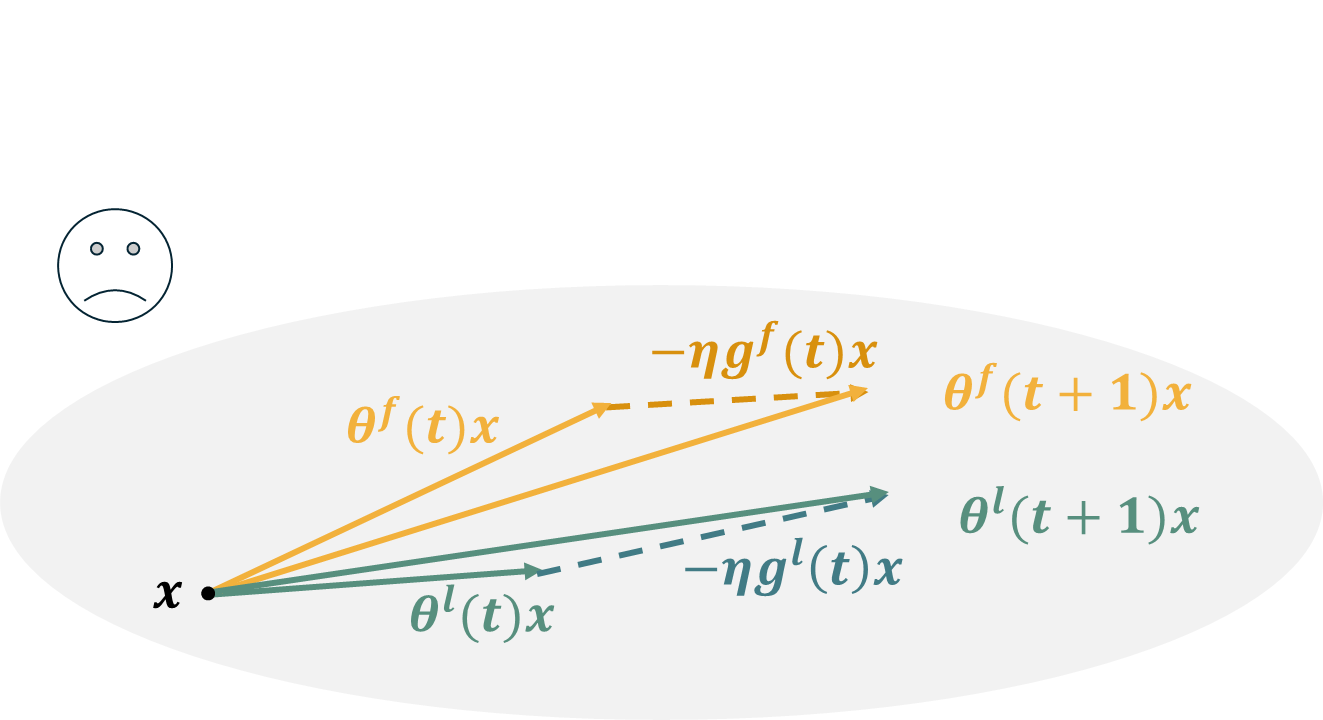}
        \caption{Insufficient Expert Specialization Without GaP-MoE}
        \label{fig:bad}
    \end{subfigure}
    \begin{subfigure}{0.495\linewidth}
        \centering
        \includegraphics[width=1\linewidth]{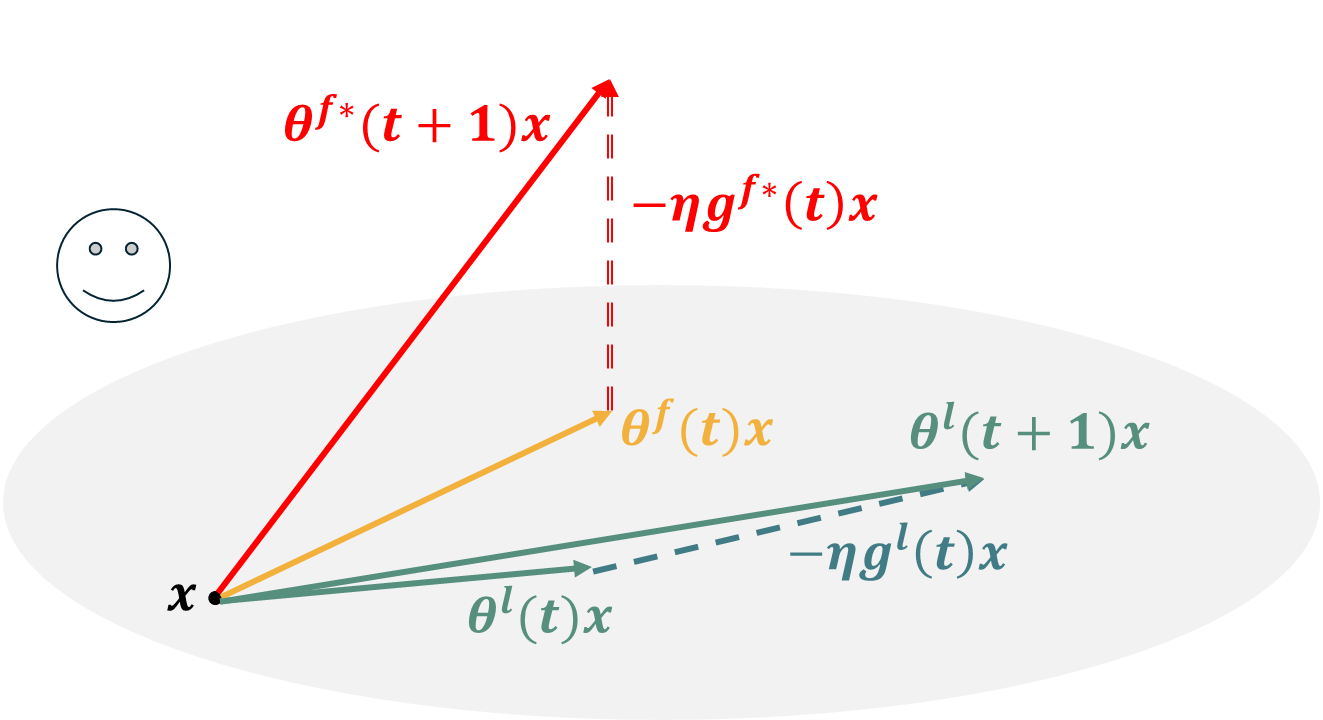}
        \caption{Explicit Specialization With GaP-MoE}
        \label{fig:good}
    \end{subfigure}
    \caption{Effect of gradient projection in the output representation space. The gray oval represents the past output gradient subspace of the leader expert ($g^l(\cdot)x$). The yellow and green lines represent the output vectors of the follower and leader experts, respectively, for the input token $x$, while the dotted lines represent their corresponding gradient update terms ($-\eta g(t)x$). If these output vectors are aligned with the same subspace, the output gradient term $-\eta g(t)x$ is also aligned with that subspace. However, by projecting the follower's gradient term, its output can escape from the existing subspace and encode different information, as illustrated by the red line.}
    \label{fig:fig2}
\end{figure*}

\subsection{Gradient Accumulation}\label{sec:accumulation}
GaP-MoE maintains a gradient buffer $Q_i \in \mathbb{R}^{h \times H}$ for each expert $i$, where $h$ is the buffer size. %Each buffer stores only the unprojected output gradients obtained when the corresponding expert is selected as a leader.

Because multiple tokens in a mini-batch may select the same expert as their leader, their output gradients are aggregated before being stored.
% Within a mini-batch, multiple tokens may be routed to the same expert. Therefore, instead of storing every output gradient individually, all leader output gradients accumulated for the corresponding expert within the current update are first averaged into a single gradient vector.
% Let $\mathcal{T}_i^{l}$ denote the set of tokens for which expert $i$ is selected as the leader within the current update:
Define
\begin{equation}
    \mathcal{T}_i^{l} = \{\, x : \text{Expert } E_i \text{ is selected as the leader for token } x \}.
\end{equation}
The corresponding mean gradient is then given by %averaged leader gradient is then given by
\begin{equation}
    \bar{g}_i^l = \frac{1}{|\mathcal{T}_i^{l}|} \sum_{\tau \in \mathcal{T}_i^{l}} g_{i,\tau}^l.
\end{equation}
Here, we write $g_{i,\tau}^l$ to denote the leader output gradient of expert $E_i$ for the $\tau$-th token in the current mini-batch.
To retain only a new gradient direction, $\bar{g}_i$ is orthogonalized against the current orthonormal basis vectors $b_i^{(1)},...,b_i^{(r)}$ in $Q_i$ as
\begin{equation}
    \hat{g}_i = \frac{\bar{g}_i}{\lVert \bar{g}_i \rVert_2},
    \quad v_i = \hat{g}_i - \sum_{m=1}^{r} \left( b_i^{(m)\top} \hat{g}_i \right) b_i^{(m)}.
\end{equation}
$v_i$ is normalized to obtain a new basis vector
\begin{equation}
    b_i^{\text{new}} = \frac{v_i}{\lVert v_i \rVert_2},
\end{equation}
which is appended to $Q_i$.
When the buffer reaches its maximum size $h$, the oldest vector is removed following a First-In-First-Out policy as
\begin{equation}
    Q_i \leftarrow
    \begin{cases}
        Q_i \cup \{b_i^{\text{new}}\}, & r < h \\[4pt]
        \left( Q_i \setminus \{b_i^{(1)}\} \right) \cup \{b_i^{\text{new}}\}, & r = h
    \end{cases}
\end{equation}
Consequently, $Q_i$ represents the recent unprojected update subspace of expert $E_i$ and is used for the projection in Section \ref{sec:projection}.

% The averaged gradient $\bar{g}_i$ is subsequently converted into an orthonormal basis vector via the Gram--Schmidt process before being stored in the buffer. Let $b_i^{(1)}, \dots, b_i^{(r)}$ denote the $r$ orthonormal basis vectors currently stored in $Q_i$, where $r \le h$. The new candidate vector is first normalized,
% \begin{equation}
%     \hat{g}_i = \frac{\bar{g}_i}{\lVert \bar{g}_i \rVert_2},v_i = \hat{g}_i - \sum_{m=1}^{r} \left( b_i^{(m)\top} \hat{g}_i \right) b_i^{(m)}
% \end{equation}
% then projected orthogonal to the existing basis,
% \begin{equation}
%     v_i = \hat{g}_i - \sum_{m=1}^{r} \left( b_i^{(m)\top} \hat{g}_i \right) b_i^{(m)},
% \end{equation}
% and re-normalized to obtain the new basis vector,
% \begin{equation}
%     b_i^{\text{new}} = \frac{v_i}{\lVert v_i \rVert_2}.
% \end{equation}

% The buffer $Q_i$ operates in a first-in, first-out (FIFO) manner: once the number of stored basis vectors reaches $h$, the oldest basis vector $b_i^{(1)}$ is discarded to accommodate the newest one.

% Through this process, the buffer $Q_i$ for expert $E_i$ comes to contain directional information about the gradients used in the leader expert update. The updated $Q_i$ is then used in the gradient projection process described in Section 3.2; the leader gradient generated anew during this process is averaged and used to update $Q_i$ in the next step.

\paragraph{Role-dependent Buffer Update}
Within a mini-batch, the same expert pair $(E_j, E_k)$ may take opposite roles for different tokens, appearing as $(E_j^l, E_k^f)$ for one token and $(E_j^f, E_k^l)$ for another token.
In this case, only the gradient received as the leader $g_j^l$ is used to update $Q_j$, while the projected gradient $g_j^{f*}$ is not used. This is to prevent the direction altered by the projection from being immediately reflected in the buffer, which would make the training process noisy.

\paragraph{Extension to $K>2$}
For $K > 2$, GaP-MoE selects one leader from the $TopK$ experts and treats the remaining $K-1$ experts as followers. The leader buffer is then used to project each follower gradient independently.
% the same dynamic leader selection method is used to designate a single leader and multiple followers, employing a structure in which the leader's buffer is projected onto each follower.

\subsection{Theoretical Justification of GaP-MoE}

\newtheorem{definition}{Definition}
\newtheorem{lemma}{Lemma}
\newtheorem{theorem}{Theorem}
\newtheorem{proposition}{Proposition}
\newtheorem{assumption}{Assumption}

GaP-MoE aims to prevent co-routed experts from being repeatedly updated in similar directions. As illustrated in Figure \ref{fig:bad}, when two experts receive highly aligned gradients for the same token, their output representations may continue to evolve toward similar functions.
In contrast, Figure \ref{fig:good} shows how GaP-MoE projects the follower gradient away from the leader's recent gradient subspace, thereby directing the follower toward a different update direction. GaP-MoE therefore reduces the gradient components shared by co-routed experts rather than directly constraining their output representations.

For analysis, let the outputs of the leader and follower experts for an input token $x$ be $y^l=\theta^l x$ and $y^f=\theta^f x$, respectively.
We measure their output difference as follows.
% The goal of GaP-MoE is to maximize the benefits of diversification by guiding the vectors $y_l = \theta_l x$ and $y_f = \theta_f x$—which are transformed from the same token using their respective parameters—to explore as diverse an output subspace as possible, thereby maximizing the benefits of diversification. 

\begin{definition}[Expert Output Difference]
\leavevmode\par
For an input token $x$, the expert output difference between the leader and follower experts is

\[
S(\theta^l,\theta^f;x)
=
\left\|y^l-y^f\right\|
=
\left\|(\theta^l-\theta^f)x\right\|.
\]
A larger $S$ indicates that the two experts produce more differentiated representations for the same input.
\end{definition}

To describe how gradient projection affects the outputs, consider the standard SGD update
% For theoretical discussion, we will multiply the general Stochastic Gradient Descent update equation,
\begin{equation}
\theta(t+1) = \theta(t) - \eta g_\theta(t),\label{eq:sgd}
\end{equation}
where $\eta$ is the learning rate and $t$ denotes the optimization step. Since $g_{\theta}(t)=\frac{\partial\mathcal{L}}{\partial\theta}=\frac{\partial\mathcal{L}}{\partial y}\frac{\partial y}{\partial \theta}=g(t)x^\top$, multiplying Eq. \eqref{eq:sgd} by $x$ gives the corresponding update in the expert-output space:
% by the input token $x$ to transform it into an equation for the output space: 
\begin{equation}
\theta(t+1)x = \theta(t)x - \eta g(t)||x||_2^2.\label{eq:sgd_update}
\end{equation}
% Here, $\eta$ is the learning rate, $t$ is the update time step, and $g_\theta = \frac{\partial L}{\partial \theta} = \frac{\partial L}{\partial y} \cdot \frac{\partial y}{\partial \theta} = \frac{\partial L}{\partial y} \cdot x^{\top} = g x^\top$ is the gradient passed to the expert parameter $\theta$. 

The output difference can be related to the angle between the expert outputs through the law of cosines:
\begin{align}
    S^2(\theta^l,\theta^f;x)=||\theta^l x||_2^2+||\theta^f x||_2^2-2||\theta^l x||_2||\theta^f x||_2 \cos\psi_x,
\end{align}
where $\cos\psi_x=\frac{(\theta^l x)^\top (\theta^f x)}{||\theta^l x||_2 ||\theta^f x||_2}$.
% Thus, when the two output norms are comparable, i.e., $\left\|\theta^l x\right\| \approx \left\|\theta^f x\right\| \approx c$, decreasing $\cos \psi_x$, or equivalently increasing the angle $\psi_x$, corresponds to increasing the output difference. This relation motivates GaP-MoE to suppress aligned expert-update directions that can repeatedly drive co-routed experts toward similar representation.
Thus, when  $\left\|\theta^l x\right\| \approx \left\|\theta^f x\right\| \approx c$, a larger angle $\psi_x$ corresponds to a larger output difference, motivating GaP-MoE to suppress aligned expert-update directions.
% Our goal is to maximize $S(\theta_l,\theta_f;x)$, and this can be expanded following the law of cosines.
% For a readable expansion, we assume that the output norms of two different experts are similar,
% $\left\|\theta_l x\right\| \approx \left\|\theta_f x\right\| \approx c$ this results in minimizing the following term:
% \begin{equation}
% 2c^2 (1 - \mathrm{cos}\phi_x)
% \end{equation}
% where $\mathrm{cos}\phi_x = \frac{(\theta_l x)^\top (\theta_f x)}{\left\|\theta_l x\right\| \left\|\theta_f x\right\|} \in [-1, 1]$.

% In other words, since this leads to the conclusion that we can reduce similarity by increasing the angle between outputs, the goal is to minimize $⟨\theta_l (t+1)x, \theta_f (t+1)x⟩$.
However, directly controlling $\psi_x$ during optimization would require an explicit output-level constraint. GaP-MoE instead promotes output differentiation by modifying the follower update direction. To characterize this mechanism, let $Q_j$ contain the orthonormal basis vectors accumulated from the recent leader gradients, and define $P_j=Q_j^\top Q_j$ as the orthogonal projector onto the corresponding gradient subspace.
Under full projection, i.e., $\alpha=1$, the follower output gradient becomes $g_k^{f*} = (I - P_j) g_k^f$. Using Eq. \eqref{eq:sgd_update}, the resulting follower output after one update becomes
\begin{align}
    \theta_k^{f*} (t+1)x = \theta_k^f (t)x - \eta(I - P_j) g_k^f \left\|x\right\|_2^2.\label{eq:follower_update}
\end{align}
Substituting Eq. \eqref{eq:follower_update} into the inner product with the leader, which is our target to reduce, yields
\begin{equation}
\begin{gathered}
\langle \theta_j^l(t)x - \eta g_j^l(t)x,\,
\theta_k^f(t)x - \eta g_k^{f*}(t)x \rangle \\
= \langle \theta_j^l(t)x, \theta_k^f(t)x \rangle
+ \langle -\eta g_j^l(t)x, \theta_k^f(t)x \rangle
+ \langle -\eta g_k^{f*}(t)x, \theta_j^l(t)x \rangle
+ \eta^2 \langle g_j^l(t)x, g_k^{f*}(t)x \rangle.
\end{gathered}
\label{eq3}
\end{equation}
% Let us denote the operation of projecting the follower’s gradient onto the leader’s buffer as $g_f^* = (I - Q_l^T Q_l) g_f = (I - P_l) g_f$. We assume that the direction of the expert output gradient, which consists $Q_i$ does not change significantly in the short term. 
% Applying this to the gradient update equation yields, we define the expert output after the projection operation as: 
% \begin{equation}
% \theta_f^* (t+1)x = \theta_f (t)x - \eta(I - P_l) g_f \left\|x\right\|^2
% \end{equation}
% Substituting this into the inner product expression with the leader expert output, 
% $⟨\theta_l (t+1)x, \theta_f^* (t+1)x⟩$, yields the equation below.
% \begin{align}
% &\langle\theta_l(t)x - \eta g_l(t)x,\ \theta_f(t)x - \eta g_f^*(t)x \rangle \notag\\
% &= \langle \theta_l(t)x, \theta_f(t)x \rangle \notag  + \langle -\eta g_l(t)x, \theta_f(t)x \rangle \notag\\
% &+ \langle -\eta g_f^*(t)x, \theta_l(t)x
% +\eta^2 \langle g_l(t)x, g_f^*(t)x \rangle\label{eq3}
% \end{align}
This equation builds the theoretical advantages of GaP-MoE in the following proposition. 
\begin{proposition}[Gradient Projection for Expert Separation]
The projection to follower output gradients can reduce $⟨\theta_j^l (t+1)x, \theta_k ^f(t+1)x⟩$ subject to the efficiency of MoE training. \label{prop1}
\end{proposition}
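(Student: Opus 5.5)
The plan is to compare the expansion in Eq.~\eqref{eq3} term by term with its unprojected counterparts. The comparison is between the projected follower gradient $g_k^{f*}=(I-P_j)g_k^f$ and the vanilla SMoE follower gradient $g_k^f$, with the leader update left unchanged. In Eq.~\eqref{eq3}, the first term $\langle \theta_j^l(t)x,\theta_k^f(t)x\rangle$ and the second term $\langle -\eta g_j^l(t)x,\theta_k^f(t)x\rangle$ do not involve the follower gradient, so they are identical in both cases. The proof therefore reduces to controlling the two remaining terms. One is the first-order cross term $-\eta\|x\|_2^2\langle g_k^{f*},\theta_j^l(t)x\rangle$; the other is the second-order term $\eta^2\|x\|_2^4\langle g_j^l,g_k^{f*}\rangle$. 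Throughout, $g(t)x$ is read as the output-space increment $g(t)\|x\|_2^2$ from Eq.~\eqref{eq:sgd_update}.

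\textbf{Second-order term.} I will handle this term first. I will use a working assumption, stated explicitly, that the leader's current output gradient lies approximately in the span of its recently buffered gradients. This is natural because $Q_j$ stores the Gram--Schmidt basis of recent averaged leader gradients and the gradient direction drifts slowly. Under the assumption, $g_j^l=P_jg_j^l+\epsilon$ with $\epsilon$ small. Since $P_j$ is a symmetric idempotent projector,
\[
\langle g_j^l,(I-P_j)g_k^f\rangle=\langle (I-P_j)g_j^l,g_k^f\rangle=\langle \epsilon,g_k^f\rangle .
\]
So the $\eta^2$ term, which in vanilla SMoE equals $\eta^2\|x\|^4\langle g_j^l,g_k^f\rangle$ and is large precisely when the two gradients are aligned, is driven to $O(\|\epsilon\|)$. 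The same identity shows why the threshold $\gamma$ matters: projection is applied exactly when the recent bases are aligned, that is, when this term would otherwise be large and positive.

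\textbf{First-order cross term.} This term needs an extra ingredient, and I expect it to be the main obstacle, because $\theta_j^l(t)x$ has no a priori relation to the subspace $P_j$. My first attempt is an unrolling argument. Under SGD the leader's output is its initial output plus a sum of past increments $-\eta g_j^l(s)\|x\|^2$, and by the role-dependent buffer update these increments are the very vectors accumulated in $Q_j$. So, apart from the initialization component and directions already evicted from the FIFO buffer, $\theta_j^l(t)x\approx P_j\theta_j^l(t)x$, and then $\langle (I-P_j)g_k^f,\theta_j^l(t)x\rangle\approx 0$. The cross term thus loses whatever alignment-driven contribution it had. If this does not give a clean sign, I will fall back to a weaker conclusion. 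In that version the projection removes exactly the component of the follower's update lying in the leader's recent update subspace. The proposition then asserts that the inner product can be reduced, that is, it is reduced whenever $\langle P_jg_k^f,\theta_j^l(t)x\rangle$ has the sign that vanilla training uses to pull the two outputs together.

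\textbf{The efficiency qualifier.} Finally, I will interpret ``subject to the efficiency of MoE training'' as the requirement that the leader, which is the expert with the higher routing probability in expectation, keeps its full task gradient $g_j^l$. The follower also retains the component $(I-P_j)g_k^f$, which is still a descent direction whenever it is nonzero, since $\langle g_k^f,(I-P_j)g_k^f\rangle=\|(I-P_j)g_k^f\|^2\ge 0$. The case $\alpha<1$ interpolates between vanilla SMoE and full projection, giving a monotone trade-off between the two regimes. Together these points show that the reduction of the inner product is obtained without sacrificing descent of the task loss.
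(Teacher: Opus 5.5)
Your proposal follows the paper's proof. It uses the same four-term expansion and kills the $\eta^2$ term because $(I-P_j)$ strips the follower component along the leader's buffered gradient direction. It then reduces the first-order cross term to the initialization contribution $\langle -\eta g_k^{f*}(t)x,\theta_j^l(t_0)x\rangle$ by unrolling the leader's SGD trajectory.

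The one real difference is in that unrolling step. The paper justifies it with the short-term gradient-consistency assumption: consecutive gradients are nearly collinear, so $\eta\sum_{s=t_0}^{t-1}g_j(s)\approx\alpha_j(t)g_j(t)$. That assumption is what absorbs the evicted-buffer directions and expert $j$'s projected follower-role updates, which never enter $Q_j$; your appeal to the span of $Q_j$ does not control these by itself. Your explicit comparison against the unprojected baseline and your descent-direction remark for the efficiency clause are additions the paper leaves implicit.
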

\begin{proof}
See Appendix~\ref{appendix:proof}.
% For \eqref{eq4},
% \begin{align}
% ⟨\frac{\partial L}{\partial y_l} \left\|x\right\|^2,(I-P)\frac{\partial L}{\partial y_f} \left\|x\right\|^2⟩\notag\\=⟨\frac{\partial L}{\partial y_l}x,(I-P)\frac{\partial L}{\partial y_f}x⟩ \left\|x\right\|^2\approx0\notag
% \end{align}
% This is because the projection $I-P$ made the vector $\frac{\partial L}{\partial y_f}x$ orthogonal to $\frac{\partial L}{\partial y_l}x$ by eliminating the overlapping direction of it.
% The parameters at time $t$ can be expressed by accumulating updates from an initial time $t_0$, where $t_0$ denotes the earliest time at which the short-term gradient consistency assumption holds:
% \begin{align}
% \theta(t)=\theta(t_0)-\eta\sum_{s=t_0}^{t}g(s)\\
% \text{where, } \langle g(s-1),g(s)\rangle\approx 0, \forall{s<t} \notag
% \end{align}
% Under this assumption, the accumulated gradients satisfy
% \begin{align}
% \sum_{s=t_0}^{t} g_l(s)
% \approx
% \alpha_l(t)g_l(t),
% \end{align}
% where $\alpha_l(t)$ is a scalar representing the accumulated update magnitude. Therefore, the parameter at time $t$ can be expressed as
% \begin{align}
% \theta(t)=\theta(t_0)-\alpha_l(t)g_l(t),
% \end{align}
% where $g_l(t)$ denotes the current unit gradient of expert $l$.
% Then,
% \begin{align}
% &\langle-\eta g_f^*(t)x,\theta_l(t)x\rangle\notag\\
% =&
% \langle-\eta g_f^*(t)x,\theta_l(t_0)x\rangle
% +\eta\alpha_l(t)\langle g_f^*(t)x,\theta_l(t)x\rangle\notag\\
% \approx&
% \langle-\eta g_f^*(t)x,\theta_l(t_0)x\rangle
% +0.
% \end{align}
% where the last approximation follows from the same argument as above.
\end{proof}
Orthogonalizing the output gradients serves to reduce the alignment between the two different expert output vectors through which a single token passes. 

It should be noted that we do not perform direct optimization for our reduction goal. Aggressively minimizing this similarity could drive the two representations toward opposite directions, i.e., $\psi=\pi$. In such a case, their contributions may cancel each other in the weighted sum of Eq. \eqref{eq_moe}, potentially hindering MoE optimization.
Instead, GaP-MoE orthogonalizes the follower update with respect to the leader gradient subspace, thereby alleviating excessive output alignment without explicitly forcing the expert outputs apart.
Proposition \ref{prop1} characterizes this mechanism by showing that the projection removes the follower-gradient component shared with the leader subspace, while minimally altering the original task gradient.
% Such optimization yields two output representation head toward opposite directions, i.e., $\psi_x$=$\pi$. However, if that were to happen, the influence of each output vector would be canceled out during the linear weight sum process (Eq    . \eqref{eq_moe}), which would hinder MoE training. Therefore, we perform orthogonalization to prevent this situation, and the role of Proposition~\ref{prop1} is to show that orthogonalization reduces the optimization goal.

% However, when the two vectors point in completely opposite directions, this becomes a major obstacle to the training of MoE, which uses a linear combination of outputs from multiple experts as its final output.

% GaP-MoE uses orthogonal projection to increase the angle between gradients without significantly hindering the learning process. At the same time, dynamic leader selection fully protects the gradient of a single expert at each step, thereby directly ensuring expert specialization. This suggests that even if the MoE architectures described above do not perform as intended, GaP-MoE can serve as a minimal safeguard during training.

\section{Experiments}

Although MoE is being applied in various fields thanks to its scalability  \citep{VMoE,you2021speechmoe,lin2026moe}, this paper focuses on natural language tasks. 

Since GaP-MoE is designed to deliver relative performance improvements when combined with other models, we compare it to the baseline SMoE \citep{shazeer2017} across various tasks and subsequently compare the performance changes when combined with various models already published for specific tasks. For all tasks, each feed-forward sub-layer of the baseline models is replaced with a MoE layer consisting of four experts—composed of two linear layers and a GELU—and the $Top2$ routing method is employed.
For more detailed experimental setup, see Appendix~\ref{appedinx:hyperparam}.

The hyperparameter controlling the projection intensity is set to its maximum value of 1 for all tasks; however, during the first epoch, it is set to increase linearly from 0.1 to 1.0 in steps to mitigate the impact of noisy gradient changes. Additionally, the projection threshold $\gamma$ for GaP-MoE is set to -1 and 0.7, respectively, to represent both full projection and minimal projection scenarios.

We evaluate GaP-MoE in both fine-tuning and from-scratch scenarios.

\subsection{Fine-tuning}
Based on BERT \citep{devlin2019bert}, we use the publicly available pretrained version, bert-base-uncased, obtained via the HuggingFace Transformers library~\citep{wolf2020transformers}. 

We replace the 12 feed-forward layers with MoE layers consisting of four experts of the same parameter size, and initialize the experts identically based on the pretrained parameters. For the tasks, we use the GLUE benchmark \citep{wang2018glue}, which includes a total of eight NLP tasks such as sentence reasoning and sentiment classification, and evaluate the model using task-specific metrics such as accuracy and F1 score. After training for five epochs, we summarize the final results based on the best performance. 

Table~\ref{fine} shows the fine-tuning evaluation results.
In the fine-tuning experiments reported here, GaP-MoE is applied to all 12 MoE layers. While this configuration yields consistent but modest improvements in average performance over the SMoE baseline, we are investigating whether selectively applying GaP-MoE to a subset of layers can provide further benefits.

It is worth noting that performing projection at all times yielded higher results on average than projecting only when the last basis similarity was high. This suggests that, in MoE architectures, gradient projection more often aids expert specialization than hinders learning.

\begin{table*}[t]
\centering

\begin{tabular}{l|cccccccc|c}
  \toprule
  Models & CoLA & SST-2 & MRPC & STS-B & QQP & MNLI &QNLI & RTE & \textbf{Avg.}\\
  \hline
  SMoE & 56.00 & 93.16 & 85.21 & \textbf{89.28} & 91.20 & 84.54 & 91.45 & 64.86 & 81.96\\
  GaP-MoE($\gamma=-1$) & 56.37 & \textbf{93.20} & 84.64 & 89.22 & \textbf{91.28} & \textbf{84.75} & \textbf{91.58} & \textbf{65.46} & \textbf{82.06} \\
  GaP-MoE($\gamma=0.7$) & \textbf{56.44} & 93.00 & \textbf{85.29} & 89.27 & 91.28 & 84.52 & 91.45 & 64.62 & 81.98 \\
  \bottomrule
\end{tabular}
\caption{Evaluation results on the GLUE benchmark. For evaluation metrics, CoLA uses the Matthews correlation coefficient, STS-B uses Pearson’s correlation coefficient, and all others use accuracy. For all of them, the higher the score, the better. Results are averaged over three seeds.}
\label{fine}
\end{table*}

\begin{table}[t]
\centering

\begin{tabular}{l|ccc}
  \toprule
  Models & WikiText-103 & enwik8 & text8\\
  \hline
  SMoE & 78.80 & 2.005& 1.962\\
  GaP-MoE($\gamma=0.7$) & \textbf{76.50} & \textbf{1.953} & \textbf{1.945}\\
  \bottomrule
\end{tabular}
\caption{Evaluation results of from-scratch environments. For evaluation metrics, WikiText-103 uses perplexity, while enwik8 and text8 use BPC. For all of them, the lower the score, the better. Results are averaged over three seeds.}
\label{from}
\end{table}

\begin{table}[t]
\centering
\begin{minipage}[t]{0.36\textwidth}
\vspace{0pt}
\small

\begin{minipage}[t][3.8cm][c]{\linewidth}
\vspace{0pt}
\centering
\begin{tabular}{l|c}
  \toprule
  Models & WikiText-103 \\
  \midrule
  SMoE & 83.63 \\
  GaP-MoE($\gamma=-1$) & \textbf{81.35} \\
  GaP-MoE($\gamma=0.7$) & 82.44 \\
  \bottomrule
\end{tabular}
\end{minipage}

\caption{Performance on the WikiText-103 dataset without router training. Instead of using a trained router, two experts were randomly selected for each token. The lower the score, the better. Results are averaged over three seeds.}
\label{uni}
\end{minipage}
\hfill
\begin{minipage}[t]{0.61\textwidth}
\vspace{0pt}
\small

\begin{minipage}[t][3.8cm][c]{\linewidth}
\vspace{0pt}
\centering
\setlength{\tabcolsep}{4pt}
\begin{tabular}{l|ccc}
  \toprule
  Models & CoLA & STS-B & RTE \\
  \midrule
  XMoE & 58.65 & 86.65 & 64.86 \\
  XMoE+GaP-MoE($\gamma=-1$)
    & 57.80 & \textbf{86.67} & 65.10 \\
  XMoE+GaP-MoE($\gamma=0.7$)
    & \textbf{58.76} & 86.36 & \textbf{65.82} \\
  \midrule
  SimSMoE & 57.89 & 88.90 & 65.22 \\
  SimSMoE+GaP-MoE($\gamma=-1$)
    & 58.01 & \textbf{88.97} & 65.46 \\
  SimSMoE+GaP-MoE($\gamma=0.7$)
    & \textbf{58.09} & 88.95 & \textbf{65.58} \\
  \bottomrule
\end{tabular}
\end{minipage}

\caption{Evaluation results of generalizability experiments. For evaluation metrics, CoLA uses the Matthews correlation coefficient, STS-B uses Pearson's correlation coefficient, and RTE uses accuracy. For all of them, the higher the score, the better. Results are averaged over three seeds.}
\label{gentest}
\end{minipage}
\end{table}

\begin{figure}[t]
\centering

\includegraphics[width=0.4\columnwidth]{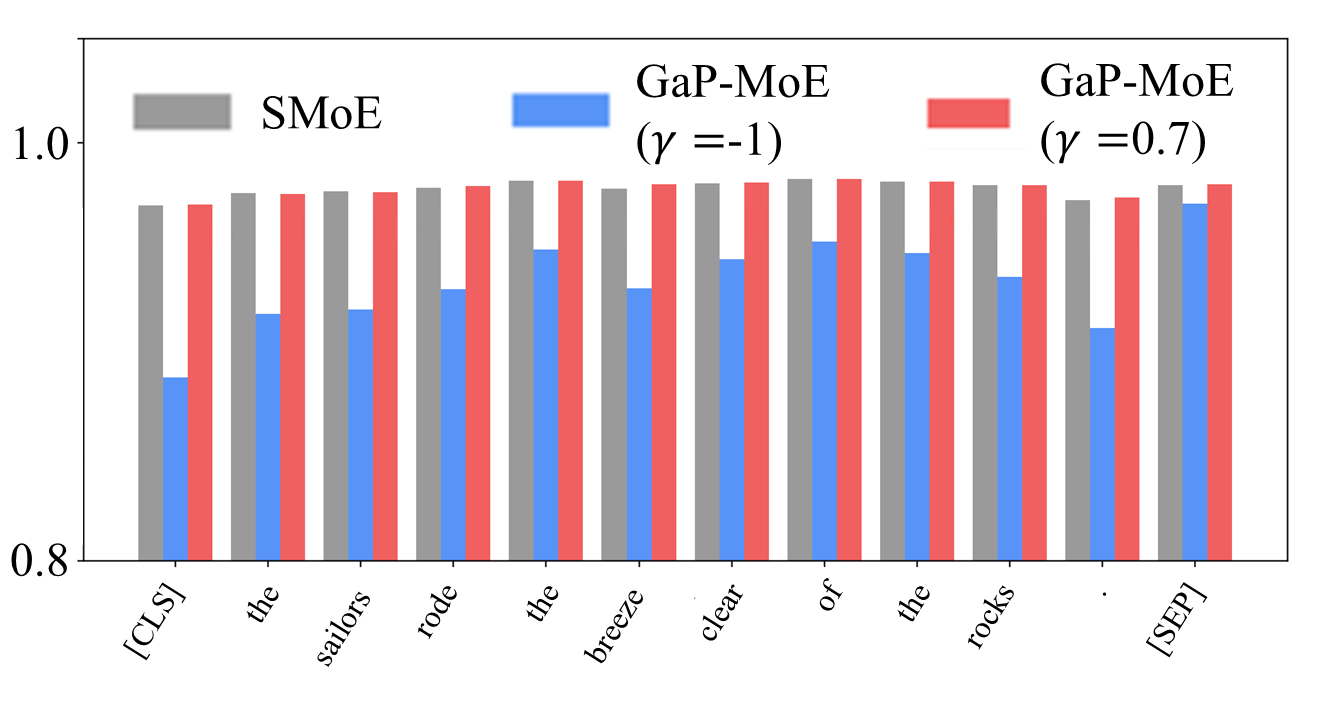} 
\caption{Each bar represents, in order, the expert output similarity of SMoE and GaP-MoE with $\gamma = -1$ and $\gamma = 0.7$, respectively, trained using a single seed on COLA. The x-axis shows each token passed through the layers in the first sequence of COLA’s evaluation set.}
\label{fig3}

\vspace{12pt}

\centering

\begin{subfigure}{0.3\linewidth}
    \centering
    \includegraphics[width=\linewidth]{Figures/Angle_SMoE.PNG}
    \caption{SMoE}
    \label{fig:first}
\end{subfigure}
\begin{subfigure}{0.3\linewidth}
        \centering
        \includegraphics[width=\linewidth]{Figures/Angle_-1.PNG}
        \caption{GaP-MoE($\gamma$=-1)}
        \label{fig:second}
\end{subfigure}
\begin{subfigure}{0.3\linewidth}
        \centering
        \includegraphics[width=\linewidth]{Figures/Angle_0.7.PNG}
        \caption{GaP-MoE($\gamma$=0.7)}
        \label{fig:third}
\end{subfigure}
\caption{The angle between expert output vectors in the sixth layer when a single token is input into the three models that have been trained on the QNLI dataset. $\angle$HD is the angle of the output vectors calculated in the original 768-dimensional output space. PCA was applied to each vector to visualize them in two dimensions, and the angle in this two-dimensional space is denoted as $\angle$2D. The red lines represent the outputs of the experts selected as $Top2$, while the black dashed lines show the results of passing the token through experts not selected as $Top2$. Results for all layers of the token are provided in Appendix~\ref{appedinx:angle}. The number in parentheses refers to the projection threshold $\gamma$.}
\label{fig:comparison}
\end{figure}

\subsection{From-scratch}

We use GPT-2 \citep{radford2019language} as the backbone model. To fit our computational budget, we adopt a reduced GPT-2 Small configuration with 6 transformer layers, 8 attention heads, and an intermediate dimension of 2048. The original GPT-2 Small consists of 12 layers, 12 attention heads, and an intermediate dimension of 3072. Additionally, for the from-scratch task, we replaced only the last two layers with GaP-MoE—which exhibit relatively stronger gradient consistency, as will be discussed in \ref{consistency}—while retaining the standard SMoE layers for the rest of the network.

%As in the BERT experiments, each feed-forward layer is replaced with an MoE layer, while 
The expert parameters are randomly initialized. We conduct experiments on WikiText-103 \citep{merity2016pointer}, enwik8, and text8 \citep{mahoney2011large}, all of which are evaluated on the next-token prediction task. WikiText-103 is evaluated using perplexity (PPL), whereas enwik8 and text8 are evaluated using bits per character (BPC). The validation loss is evaluated every 1,000 training steps, and training is terminated if the validation loss does not improve for three consecutive evaluations. 

Table~\ref{from} presents the from-scratch evaluation results. Compared to the baseline SMoE, from-scratch experiments also showed that GaP-MoE generally performed better. However, on text8—a relatively easy dataset created by retaining only alphabetic characters and spaces from enwik8—the performance improvement wasn't as significant as it was for the other two tasks.

\subsection{Expert Specialization Without Router Training}
GaP-MoE is particularly robust to router failure because it induces expert specialization directly through the gradients generated during training. Table \ref{uni} shows the difference in model performance on the WikiText-103 dataset when the router was not trained at all, and instead two experts were randomly selected and trained. Although performance decreased significantly overall compared to Table \ref{from}, the performance drop is considerably smaller for GaP-MoE. In other words, this suggests that GaP-MoE explicitly guides the process to prevent expert specialization from failing.

\subsection{Generalizability with Different MoE}

One of the novelties of GaP-MoE is that it can be easily applied to various MoE architectures. We conducted additional experiments by applying two models—XMoE \citep{chi2022representation} and SimSMoE \citep{do2025simsmoe}, which address the same problem to our GaP-MoE experimental framework. 

Since no official implementation of XMoE is publicly available, and the official SimSMoE implementation is designed for pre-training rather than our fine-tuning setting, we implemented the core mechanisms of both methods within our unified experimental framework.

To efficiently evaluate the various experimental results, we categorized the tasks in the GLUE benchmark into three groups and conducted one experiment for each task. These are CoLA for the single-sentence classification task, STS-B for the sentence pair similarity task, and RTE for the natural language inference task.

Table \ref{gentest} shows the results of evaluating each model by applying the GaP-MoE architecture to three fine-tuning tasks. 
Interestingly, while applying GaP-MoE resulted in improved performance on most tasks, it was found to be more significantly affected by $\gamma$ than the GaP-MoE-only model. Identifying the appropriate $\gamma$ value for each task and model should be addressed in future research.

\subsection{Expert Output Differentiation}

Figure \ref{fig3} shows the cosine similarity between expert outputs in the final layer for all tokens in an input sequence, whereas Figure \ref{fig:comparison} visualizes the angles between expert outputs in the sixth layer using a single input token.

When the projection threshold $\gamma$ was set to $-1$, the similarity decreased substantially. In contrast, setting $\gamma$ to $0.7$ resulted in only a marginal reduction in similarity, as projection was applied only when the similarity exceeded the threshold. This suggests that gradient projection can be used to differentiate expert outputs. 

%However, finding an appropriate value for $\gamma$ that strikes a balance between differentiation and learning may be difficult due to task-specific characteristics or the presence of other modules. When using GaP-MoE alone, we recommend applying a low value to allow it to directly influence the learning process; when using it in conjunction with other modules, we recommend using a high value to serve as a safeguard against representation collapse.

\subsection{Short-term Gradient Consistency}
\label{consistency}

To empirically examine whether expert gradients exhibit the short-term gradient consistency assumed in the theoretical justification, we track for each expert of the first and the last layer, the cosine similarity between its current output gradient and the orthonormal basis accumulated $t$ steps earlier ($t\in\{1,3,5,10\}$) whenever that expert is selected as leader. 

Figure \ref{fig:consistency} visualizes these trajectories across layers and gap sizes. In the first step, the cosine similarity values are generally greater than 0, but they fluctuate significantly. In contrast, when the model reaches the intermediate step, the cosine similarity stabilizes. To avoid noisy updates, it is recommended to gradually increase the projection intensity $\alpha$ from a low level to a high level during the initial training phase.
Once the updates have stabilized to some extent, the variance in similarity decreases. At this point, the early layers maintain a similarity level above 0 but converge toward 0 as the gap widens. Conversely, the later layers consistently maintain a high level of similarity. In other words, since there are some differences in short-term consistency across layers, theoretical performance differences may exist between layers; therefore, we expect that performance will be enhanced by not unnecessarily increasing the buffer size. In our experiments, we used buffer sizes of 8 and 16 (for training from scratch and fine-tuning, respectively).

\begin{figure*}[t]
    \centering

    \begin{subfigure}{0.495\linewidth}
        \centering
        \includegraphics[width=\linewidth]{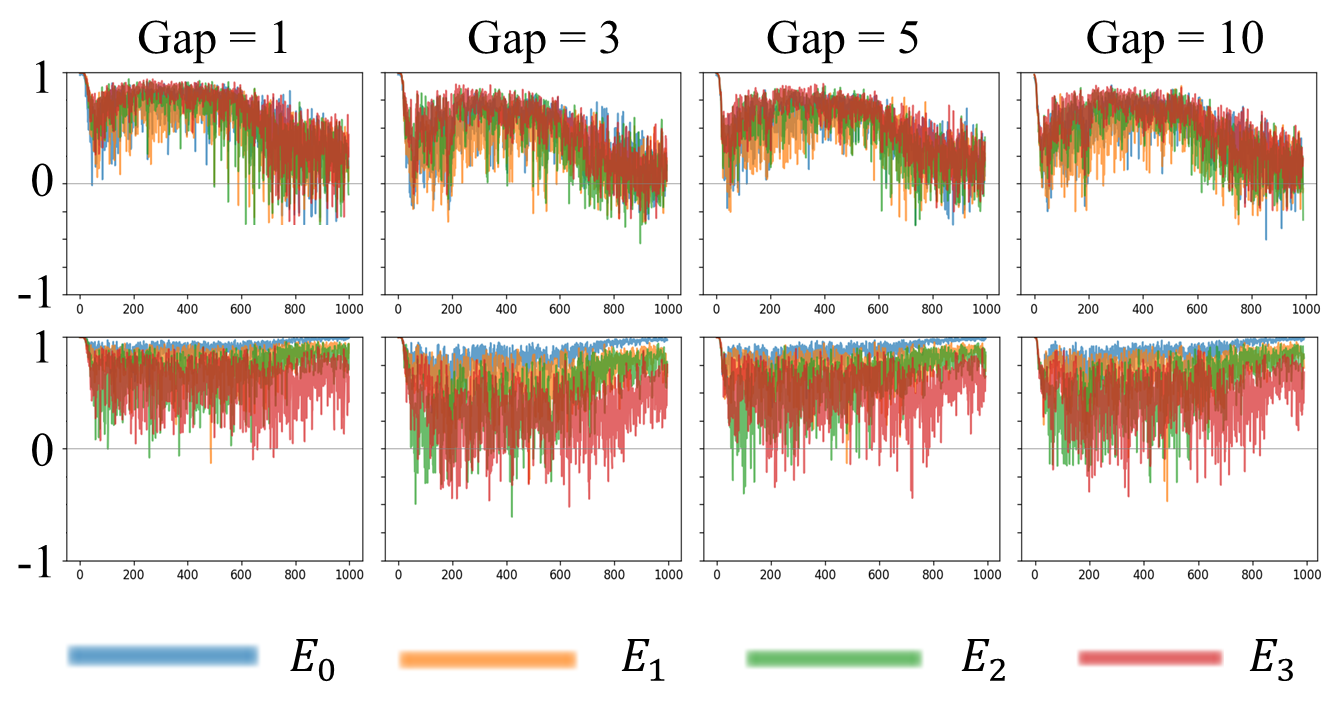}
        \caption{Similarity for each expert at the first step}
        \label{fig:dyn}
    \end{subfigure}
    \begin{subfigure}{0.495\linewidth}
        \centering
        \includegraphics[width=\linewidth]{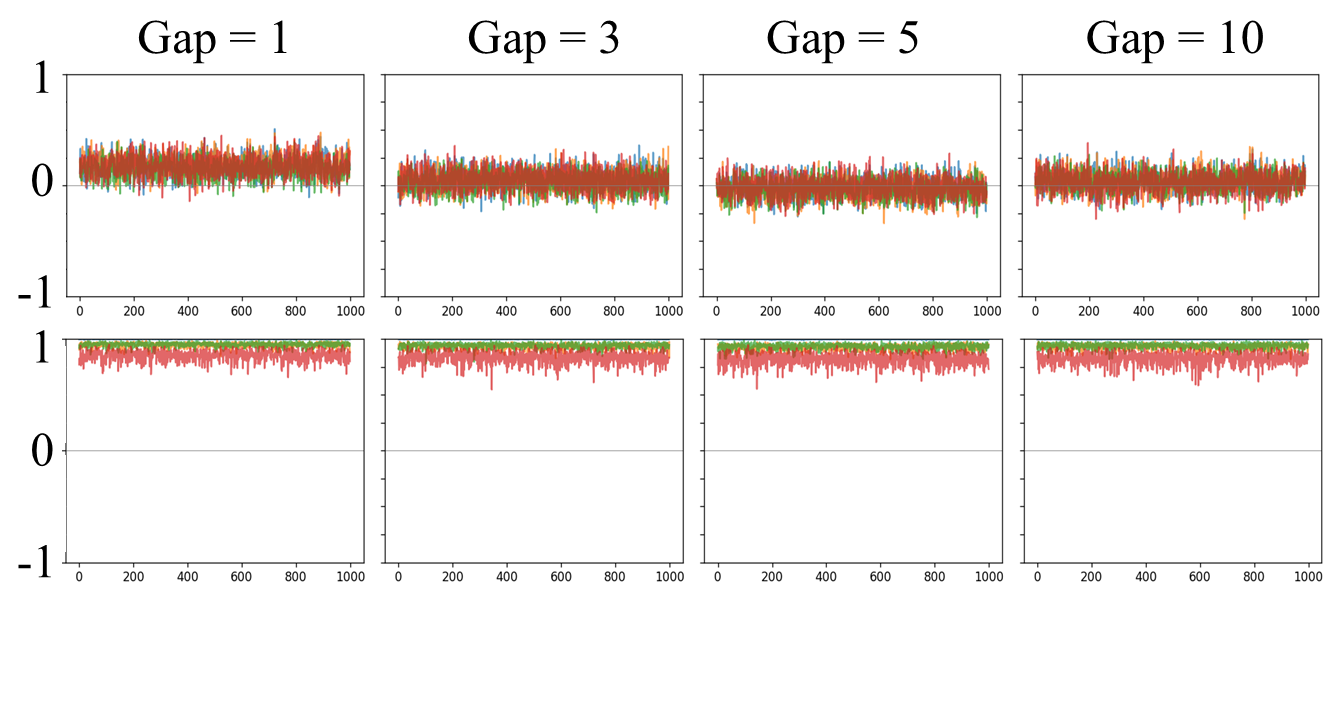}
        \caption{Similarity for each expert at the intermediate step}
        \label{fig:proj}
    \end{subfigure}
    \caption{In both cases, the top figure shows the first MoE layer, and the bottom figure shows the last MoE layer; the $t\in[1,3,5,10]$ used to set the Gap indicates how many time steps back the expert was compared to.}
    \label{fig:consistency}
\end{figure*}

\section{Conclusion and Future Work}

This study proposed GaP-MoE, which retains the standard training procedure of sparse MoE with $TopK$ routing while imposing explicit orthogonalization constraints only on pre-computed expert output gradients. Through dynamic leader selection and gradient projection, GaP-MoE encourages experts to learn more diverse output representations without introducing additional parameter constraints or requiring gradient recomputation. Experimental results demonstrated that GaP-MoE achieves competitive or improved performance over the baseline SMoE and can be effectively combined with existing specialization methods, demonstrating its compatibility with different MoE architectures.

One limitation of this work is that the hyperparameters introduced by GaP-MoE, including the buffer size $h$, the choice of layers to which GaP-MoE is applied during fine-tuning, projection threshold $\gamma$, and projection intensity $\alpha$, were not systematically explored. Consequently, the reported settings may not be optimal, and the relationship between these hyperparameters and model performance remains insufficiently understood. In addition, our evaluation was limited to relatively small-scale models. Future work will therefore focus on systematic hyperparameter optimization and validating GaP-MoE on larger-scale language models.

\bibliographystyle{unsrtnat}
\bibliography{references}  %%% Uncomment this line and comment out the ``thebibliography'' section below to use the external .bib file (using bibtex) .

@article{scaling01,
  title={Scaling Laws for Neural Language Models},
  author={Kaplan, Jared and others},
  journal={arXiv preprint arXiv:2001.08361},
  year={2020}
}

@article{shazeer2017,
  title={Outrageously large neural networks: The sparsely-gated mixture-of-experts layer},
  author={Shazeer, Noam and Mirhoseini, Azalia and Maziarz, Krzysztof and Davis, Andy and Le, Quoc and Hinton, Geoffrey and Dean, Jeff},
  journal={arXiv preprint arXiv:1701.06538},
  year={2017}
}

@article{wang2024auxiliary,
  title={Auxiliary-loss-free load balancing strategy for mixture-of-experts},
  author={Wang, Lean and Gao, Huazuo and Zhao, Chenggang and Sun, Xu and Dai, Damai},
  journal={arXiv preprint arXiv:2408.15664},
  year={2024}
}

@article{lepikhin2020gshard,
  title={Gshard: Scaling giant models with conditional computation and automatic sharding},
  author={Lepikhin, Dmitry and Lee, HyoukJoong and Xu, Yuanzhong and Chen, Dehao and Firat, Orhan and Huang, Yanping and Krikun, Maxim and Shazeer, Noam and Chen, Zhifeng},
  journal={arXiv preprint arXiv:2006.16668},
  year={2020}
}

@inproceedings{do2023hyperrouter,
  title={Hyperrouter: Towards efficient training and inference of sparse mixture of experts},
  author={Do, Truong Giang and Khiem, Le and Pham, Quang and Nguyen, Trungtin and Doan, Thanh-Nam and Nguyen, Binh and Liu, Chenghao and Ramasamy, Savitha and Li, Xiaoli and Hoi, Steven},
  booktitle={Proceedings of the 2023 conference on empirical methods in natural language processing},
  pages={5754--5765},
  year={2023}
}

@inproceedings{nguyen2025statistical,
  title={Statistical advantages of perturbing cosine router in mixture of experts},
  author={Nguyen, Huy and Akbarian Saravi, Pedram and Pham, Trang and Nguyen Vu, Thien Trang and Zhang, Shujian and Ho, Nhat},
  booktitle={International Conference on Learning Representations},
  volume={2025},
  pages={95812--95844},
  year={2025}
}

@article{huang2026sd,
  title={SD-MoE: Spectral Decomposition for Effective Expert Specialization},
  author={Huang, Ruijun and Dong, Fang and Zhang, Xin and Cao, Hengjie and Huang, Zhendong and Chen, Anrui and Zhou, Jixian and Chen, Mengyi and Yang, Yifeng and Dong, Mingzhi and others},
  journal={arXiv preprint arXiv:2602.12556},
  year={2026}
}

@article{fedus2022switch,
  title={Switch transformers: Scaling to trillion parameter models with simple and efficient sparsity},
  author={Fedus, William and Zoph, Barret and Shazeer, Noam},
  journal={Journal of Machine Learning Research},
  volume={23},
  number={120},
  pages={1--39},
  year={2022}
}

@inproceedings{zhang2022mixture,
  title={Mixture of attention heads: Selecting attention heads per token},
  author={Zhang, Xiaofeng and Shen, Yikang and Huang, Zeyu and Zhou, Jie and Rong, Wenge and Xiong, Zhang},
  booktitle={Proceedings of the 2022 Conference on Empirical Methods in Natural Language Processing},
  pages={4150--4162},
  year={2022}
}

@inproceedings{zhang2023robust,
  title={Robust mixture-of-expert training for convolutional neural networks},
  author={Zhang, Yihua and Cai, Ruisi and Chen, Tianlong and Zhang, Guanhua and Zhang, Huan and Chen, Pin-Yu and Chang, Shiyu and Wang, Zhangyang and Liu, Sijia},
  booktitle={Proceedings of the IEEE/CVF International Conference on Computer Vision},
  pages={90--101},
  year={2023}
}

@article{guo2026advancing,
  title={Advancing expert specialization for better moe},
  author={Guo, Hongcan and Lu, Haolang and Nan, Guoshun and Chu, Bolun and Zhuang, Jialin and Yang, Yuan and Che, Wenhao and Cao, Xinye and Leng, Sicong and Cui, Qimei and others},
  journal={Advances in Neural Information Processing Systems},
  volume={38},
  pages={48767--48809},
  year={2026}
}

@inproceedings{yang2025solving,
  title={Solving token gradient conflict in mixture-of-experts for large vision-language model},
  author={Yang, Longrong and Shen, Dong and Cai, Chaoxiang and Yang, Fan and Gao, Tingting and Zhang, Di and Li, Xi},
  booktitle={International Conference on Learning Representations},
  volume={2025},
  pages={33561--33587},
  year={2025}
}

@inproceedings{liu2024diversifying,
  title={Diversifying the mixture-of-experts representation for language models with orthogonal optimizer},
  author={Liu, Boan and Ding, Liang and Shen, Li and Peng, Keqin and Cao, Yu and Cheng, Dazhao and Tao, Dacheng},
  booktitle={ECAI 2024: 27th European Conference on Artificial Intelligence, 19--24 October 2024, Santiago de Compostela, Spain--Including 13th Conference on Prestigious Applications of Intelligent Systems (PAIS 2024)},
  pages={2966--2973},
  year={2024},
  organization={SAGE Publications Pvt. Ltd 1 Oliver's Yard, 55 City Road, London, EC1Y 1SP}
}

@article{chi2022representation,
  title={On the representation collapse of sparse mixture of experts},
  author={Chi, Zewen and Dong, Li and Huang, Shaohan and Dai, Damai and Ma, Shuming and Patra, Barun and Singhal, Saksham and Bajaj, Payal and Song, Xia and Mao, Xian-Ling and others},
  journal={Advances in Neural Information Processing Systems},
  volume={35},
  pages={34600--34613},
  year={2022}
}

@article{pham2024competesmoe,
  title={CompeteSMoE--Effective Training of Sparse Mixture of Experts via Competition},
  author={Pham, Quang and Do, Giang and Nguyen, Huy and Nguyen, TrungTin and Liu, Chenghao and Sartipi, Mina and Nguyen, Binh T and Ramasamy, Savitha and Li, Xiaoli and Hoi, Steven and others},
  journal={arXiv preprint arXiv:2402.02526},
  year={2024}
}

@inproceedings{do2025simsmoe,
  title={Simsmoe: Toward efficient training mixture of experts via solving representational collapse},
  author={Do, Giang and Le, Hung and Tran, Truyen},
  booktitle={Findings of the Association for Computational Linguistics: NAACL 2025},
  pages={2012--2025},
  year={2025}
}

@inproceedings{dai2024deepseekmoe,
  title={Deepseekmoe: Towards ultimate expert specialization in mixture-of-experts language models},
  author={Dai, Damai and Deng, Chengqi and Zhao, Chenggang and Xu, RX and Gao, Huazuo and Chen, Deli and Li, Jiashi and Zeng, Wangding and Yu, Xingkai and Wu, Yu and others},
  booktitle={Proceedings of the 62nd Annual Meeting of the Association for Computational Linguistics (Volume 1: Long Papers)},
  pages={1280--1297},
  year={2024}
}

@article{krishnamurthy2023improving,
  title={Improving expert specialization in mixture of experts},
  author={Krishnamurthy, Yamuna and Watkins, Chris and Gaertner, Thomas},
  journal={arXiv preprint arXiv:2302.14703},
  year={2023}
}

@article{MoE,
  title={Adaptive mixtures of local experts},
  author={Jacobs, Robert A and Jordan, Michael I and Nowlan, Steven J and Hinton, Geoffrey E},
  journal={Neural computation},
  volume={3},
  number={1},
  pages={79--87},
  year={1991},
  publisher={MIT Press}
}

@article{gpt3,
  title={Language Models are Few-Shot Learners},
  author={Brown, Tom B. and others},
  journal={Advances in Neural Information Processing Systems},
  year={2020}
}

@article{trans,
  title={Attention is all you need},
  author={Vaswani, Ashish and Shazeer, Noam and Parmar, Niki and Uszkoreit, Jakob and Jones, Llion and Gomez, Aidan N and Kaiser, {\L}ukasz and Polosukhin, Illia},
  journal={Advances in neural information processing systems},
  volume={30},
  year={2017}
}

@article{yu2020gradient,
  title={Gradient surgery for multi-task learning},
  author={Yu, Tianhe and Kumar, Saurabh and Gupta, Abhishek and Levine, Sergey and Hausman, Karol and Finn, Chelsea},
  journal={Advances in neural information processing systems},
  volume={33},
  pages={5824--5836},
  year={2020}
}

@article{lopez2017gradient,
  title={Gradient episodic memory for continual learning},
  author={Lopez-Paz, David and Ranzato, Marc'Aurelio},
  journal={Advances in neural information processing systems},
  volume={30},
  year={2017}
}

@inproceedings{farajtabar2020orthogonal,
  title={Orthogonal gradient descent for continual learning},
  author={Farajtabar, Mehrdad and Azizan, Navid and Mott, Alex and Li, Ang},
  booktitle={International conference on artificial intelligence and statistics},
  pages={3762--3773},
  year={2020},
  organization={PMLR}
}

@inproceedings{VMoE,
 author = {Riquelme, Carlos and Puigcerver, Joan and Mustafa, Basil and Neumann, Maxim and Jenatton, Rodolphe and Susano Pinto, Andr\'{e} and Keysers, Daniel and Houlsby, Neil},
 booktitle = {Advances in Neural Information Processing Systems},
 editor = {M. Ranzato and A. Beygelzimer and Y. Dauphin and P.S. Liang and J. Wortman Vaughan},
 pages = {8583--8595},
 publisher = {Curran Associates, Inc.},
 title = {Scaling Vision with Sparse Mixture of Experts},
 url = {https://proceedings.neurips.cc/paper_files/paper/2021/file/48237d9f2dea8c74c2a72126cf63d933-Paper.pdf},
 volume = {34},
 year = {2021}
}

@article{you2021speechmoe,
  title={Speechmoe: Scaling to large acoustic models with dynamic routing mixture of experts},
  author={You, Zhao and Feng, Shulin and Su, Dan and Yu, Dong},
  journal={arXiv preprint arXiv:2105.03036},
  year={2021}
}

@article{lin2026moe,
  title={Moe-llava: Mixture of experts for large vision-language models},
  author={Lin, Bin and Tang, Zhenyu and Ye, Yang and Huang, Jinfa and Zhang, Junwu and Pang, Yatian and Jin, Peng and Ning, Munan and Luo, Jiebo and Yuan, Li},
  journal={IEEE Transactions on Multimedia},
  year={2026},
  publisher={IEEE}
}

@inproceedings{devlin2019bert,
  title={Bert: Pre-training of deep bidirectional transformers for language understanding},
  author={Devlin, Jacob and Chang, Ming-Wei and Lee, Kenton and Toutanova, Kristina},
  booktitle={Proceedings of the 2019 conference of the North American chapter of the association for computational linguistics: human language technologies, volume 1 (long and short papers)},
  pages={4171--4186},
  year={2019}
}

@inproceedings{wolf2020transformers,
  title={Transformers: State-of-the-art natural language processing},
  author={Wolf, Thomas and Debut, Lysandre and Sanh, Victor and Chaumond, Julien and Delangue, Clement and Moi, Anthony and Cistac, Pierric and Rault, Tim and Louf, R{\'e}mi and Funtowicz, Morgan and others},
  booktitle={Proceedings of the 2020 conference on empirical methods in natural language processing: system demonstrations},
  pages={38--45},
  year={2020}
}

@inproceedings{wang2018glue,
  title={GLUE: A multi-task benchmark and analysis platform for natural language understanding},
  author={Wang, Alex and Singh, Amanpreet and Michael, Julian and Hill, Felix and Levy, Omer and Bowman, Samuel R},
  booktitle={Proceedings of the 2018 EMNLP workshop BlackboxNLP: Analyzing and interpreting neural networks for NLP},
  pages={353--355},
  year={2018}
}

@article{radford2019language,
  title={Language models are unsupervised multitask learners},
  author={Radford, Alec and Wu, Jeffrey and Child, Rewon and Luan, David and Amodei, Dario and Sutskever, Ilya and others},
  journal={OpenAI blog},
  volume={1},
  number={8},
  pages={9},
  year={2019}
}

@article{merity2016pointer,
  title={Pointer sentinel mixture models},
  author={Merity, Stephen and Xiong, Caiming and Bradbury, James and Socher, Richard},
  journal={arXiv preprint arXiv:1609.07843},
  year={2016}
}

@misc{mahoney2011large,
  title={Large text compression benchmark},
  author={Mahoney, Matt},
  year={2011},
  howpublished={\url{http://mattmahoney.net/dc/text.html}}
}

@article{wang2026myth,
  title={The myth of expert specialization in moes: Why routing reflects geometry, not necessarily domain expertise},
  author={Wang, Xi and Hayou, Soufiane and Nalisnick, Eric},
  journal={arXiv preprint arXiv:2604.09780},
  year={2026}
}

\clearpage
\appendix

\section*{Appendix}

\section{Algorithm of GaP-MoE}
\label{appedinx:full_algorithm}

\begin{algorithm}[H]
\caption{GaP-MoE Update (Top-2 Routing)}
\label{alg:algorithm}
\textbf{Input}: mini-batch of tokens $\{x_n\}_{n=1}^{B}$; routing scores $G(x_n)$ from router \\
\textbf{Parameter}: expert weights $\{\theta_i\}_{i=1}^{N}$; gradient buffers $\{Q_i\}_{i=1}^{N}$; buffer size $h$; projection threshold $\gamma$; projection intensity $\alpha$ \\
\textbf{Output}: updated expert weights $\{\theta_i(t+1)\}_{i=1}^{N}$, updated buffers $\{Q_i\}$
\begin{algorithmic}[1]
\STATE Initialize $\mathcal{T}_i^{l} \leftarrow \emptyset,\ \forall i \in [1, N]$
\FOR{each token $x_n$ in the mini-batch}
    \STATE Select top-2 experts $E_j, E_k$ via routing scores $G(x_n)$
    \STATE Normalize scores over the selected pair to obtain $p_j, p_k$
    \STATE Sample leader $l \sim \mathrm{Categorical}(p_j, p_k)$
    \IF{$l = j$}
        \STATE Assign $E_j \leftarrow E_j^{l}$ (leader), $E_k \leftarrow E_k^{f}$ (follower)
        \STATE $\mathcal{T}_j^{l} \leftarrow \mathcal{T}_j^{l} \cup \{x_n\}$
    \ELSE
        \STATE Assign $E_k \leftarrow E_k^{l}$ (leader), $E_j \leftarrow E_j^{f}$ (follower)
        \STATE $\mathcal{T}_k^{l} \leftarrow \mathcal{T}_k^{l} \cup \{x_n\}$
    \ENDIF
    \STATE Compute expert outputs $y_j = \theta_j x_n$, $y_k = \theta_k x_n$
    \STATE Compute weighted MoE output for $x_n$ using $y_j, y_k$
\ENDFOR
\STATE Compute loss $L$ and backpropagate to obtain output gradients $g_n = \partial L / \partial y_n$ for all tokens
    \FOR{each follower expert $E_k^{f}$ paired with leader $E_j^{l}$} \STATE{// Projection, using buffers \emph{prior} to this step's update}
    \IF{$\mathrm{cossim}(b_j^{(r)}, b_k^{(r)}) > \gamma$}
        \STATE $g_k^{f*} \leftarrow (I - \alpha\, Q_j^\top Q_j)g_k^{f}$
    \ELSE
        \STATE $g_k^{f*} \leftarrow g_k^{f}$
    \ENDIF
\ENDFOR
\FOR{each expert $i$}
    \STATE \COMMENT{Accumulation, updates buffer for next step}
    \IF{$\mathcal{T}_i^{l} \neq \emptyset$}
        \STATE $\bar{g}_i^{l} \leftarrow \frac{1}{|\mathcal{T}_i^{l}|} \sum_{t \in \mathcal{T}_i^{l}} g_t$
        \STATE $b_i^{\text{new}} \leftarrow \mathrm{GramSchmidt}(\bar{g}_i^{l}, Q_i)$
        \STATE Update $Q_i$ with $b_i^{\text{new}}$ (FIFO, evict oldest if $|Q_i| = h$)
    \ENDIF
\ENDFOR
\STATE Update all expert weights using $g_i^l$ (leaders) and $g_i^{f*}$ (followers) via the optimizer
\STATE \textbf{return} $\{\theta_i(t+1)\}$, $\{Q_i\}$
\end{algorithmic}
\end{algorithm}

\section{Proof of Proposition 1}
\label{appendix:proof}

\begin{proof}
Recall the inner product of output space between leader and follower after SGD update:
\begin{equation}
\begin{gathered}
\langle \theta_j^l(t)x - \eta g_j^l(t)x,\,
\theta_k^f(t)x - \eta g_k^{f*}(t)x \rangle \\
= \langle \theta_j^l(t)x, \theta_k^f(t)x \rangle
+ \langle -\eta g_j^l(t)x, \theta_k^f(t)x \rangle
+ \langle -\eta g_k^{f*}(t)x, \theta_j^l(t)x \rangle
+ \eta^2 \langle g_j^l(t)x, g_k^{f*}(t)x \rangle.
\end{gathered}
\label{eq4}
\end{equation}

For $\eta^2 \langle g_j^l(t)x, g_k^{f*}(t)x \rangle$ in \eqref{eq4},    
\begin{align}
⟨\frac{\partial L}{\partial y_j^l} \left\|x\right\|^2,(I-P_j)\frac{\partial L}{\partial y_k^f} \left\|x\right\|^2⟩\notag\\=⟨\frac{\partial L}{\partial y_j^l}x,(I-P_j)\frac{\partial L}{\partial y_k^f}x⟩ \left\|x\right\|^2\approx0\notag
\end{align}
This is because the projection $I-P_j$ made the vector $\frac{\partial L}{\partial y_k^f}x$ orthogonal to $\frac{\partial L}{\partial y_j^l}x$ by eliminating the overlapping direction of it.

The parameters at time $t$ can be expressed by accumulating updates from an initial time $t_0$, where $t_0$ denotes the earliest time at which the short-term gradient consistency assumption holds:
% \begin{align}
% \theta_i(t)=\theta_i(t_0)-\eta\sum_{s=t_0}^{t}g_i(s)\\
% \text{where, } \langle g_i(s-1),g_i(s)\rangle\approx 0, \forall{s<t} \notag
% \end{align}

\begin{align}
\theta_i(t) &= \theta_i(t_0)-\eta\sum_{s=t_0}^{t-1}g_i(s)\\
\text{where }&
\frac{\langle g_i(s-1),g_i(s)\rangle}
{\|g_i(s-1)\|_2\,\|g_i(s)\|_2}
\approx 1,\quad \forall\, t_0 < s \le t-1. \notag
\end{align}

Under this assumption, the accumulated gradients satisfy
\begin{align}
\eta\sum_{s=t_0}^{t-1} g_i(s)
\approx
\alpha_i(t)g_i(t),
\end{align}
where $\alpha_i(t)$ is a scalar representing the accumulated update magnitude. Therefore, the parameter at time $t$ can be expressed as
\begin{align}
\theta_i(t)=\theta(t_0)_i-\alpha_i(t)g_i(t),
\end{align}
where $g_i(t)$ denotes the current unit gradient of expert $i$.

Then, for $\langle -\eta g_k^{f*}(t)x, \theta_j^l(t)x\rangle \notag$ in \eqref{eq4}
\begin{gather}
\langle -\eta g_k^{f*}(t)x, \theta_j^l(t)x\rangle \notag\\
=
\langle-\eta g_k^{f*}(t)x,\theta_j^l(t_0)x\rangle
+\eta\alpha_j(t)\langle g_k^{f*}(t)x,\theta_j^l(t)x\rangle\notag\\
\approx
\langle-\eta g_k^{f*}(t)x,\theta_j^l(t_0)x\rangle
+0.
\end{gather}
where the last approximation follows from the same argument as above.
\end{proof}

Therefore, orthogonalizing the output gradients serves to reduce the alignment between the two different expert output vectors through which a single token passes. However, when the two vectors point in completely opposite directions, this becomes a major obstacle to the training of MoE, which uses a linear combination of outputs from multiple experts as its final output.

\section{Detailed hyperparameter settings}
\label{appedinx:hyperparam}

\subsection{Common Experimental Setting}
\centering

\begin{tabular}{l|c}
  \toprule
  Settings & Value \\
  \hline
  Hidden size & 768 \\
  Number of Experts & 4\\
  Top-k routing & 2\\
  Weight decay & 0.01\\
  Auxiliary loss weight & 0.01 \\
  Optimizer & AdamW(0.9, 0.999)\\
  Train batch size & 32\\
  Initial $\alpha$ & 0.1\\
  Warmup & $0.1\to1.0$ for 1 epoch\\
  Load best model at end & True\\
  Max sequence length & 128 \\
  \bottomrule
\end{tabular}

% \subsection{Fine-tuning Setting}

% Trained with NVIDIA RTX A6000 D6(48GB).

% \centering

% \begin{tabular}{l|c}
%   \toprule
%   Settings & Value \\
%   \hline
%   Learning rate & 2e-5 \\
%   Epochs & 5\\
%   Basis rank & 16 \\
%   FFN intermediate & 3072 \\
%   Eval batch size & 64 \\
%   \bottomrule
% \end{tabular}

% \subsection{From-scratch Setting}

% Trained with NVIDIA RTX A6000 Ada generation(48GB).

% \centering

% \begin{tabular}{l|c}
%   \toprule
%   Settings & Value \\
%   \hline
%   Learning rate & 3e-4 \\
%   Epochs & 50\\
%   Early stopping patience & 3 \\
%   Basis rank & 8 \\
%   FFN intermediate & 2048 \\
%   Eval batch size & 32 \\
%   \bottomrule
% \end{tabular}

\subsection{Experimental Settings}

\noindent
\begin{minipage}[c]{0.48\textwidth}
\centering
\textbf{Fine-tuning}\\[6pt]
NVIDIA RTX A6000 D6 (48GB)\\[10pt]

\begin{tabular}{l|c}
  \toprule
  Settings & Value \\
  \midrule
  Learning rate & 2e-5 \\
  Epochs & 5 \\
  Basis rank & 16 \\
  FFN intermediate & 3072 \\
  Eval batch size & 64 \\
  \bottomrule
\end{tabular}
\end{minipage}
\hfill
\begin{minipage}[c]{0.48\textwidth}
\centering
\textbf{Training from Scratch}\\[6pt]
NVIDIA RTX A6000 Ada generation (48GB)\\[10pt]

\begin{tabular}{l|c}
  \toprule
  Settings & Value \\
  \midrule
  Learning rate & 3e-4 \\
  Epochs & 50 \\
  Early stopping patience & 3 \\
  Basis rank & 8 \\
  FFN intermediate & 2048 \\
  Eval batch size & 32 \\
  \bottomrule
\end{tabular}
\end{minipage}

\section{Expert Output Angles Across All Layers}
\label{appedinx:angle}

Fine-tuned models with QNLI, the angle of expert outputs of each layer. The input token is \textit{organization} of the input sequence \textit{What organization is devoted to Jihad against Israel?}, from QNLI test set. Output after 3 epochs, which was the best model for all.

\begin{center}
    \includegraphics[width=0.9\linewidth]{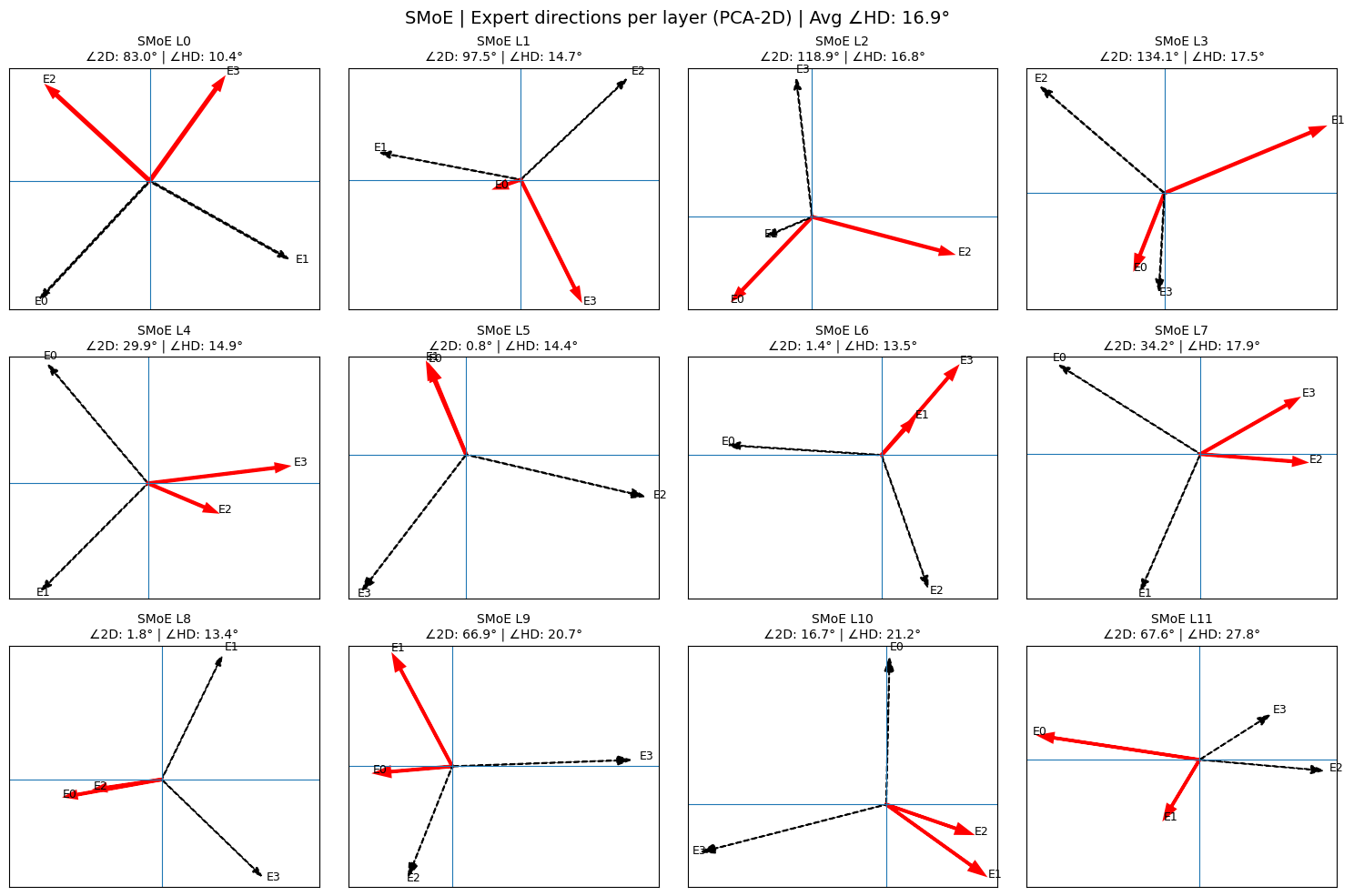}

    \small Angle visualization of SMoE
\end{center}

\begin{center}
    \includegraphics[width=0.9\linewidth]{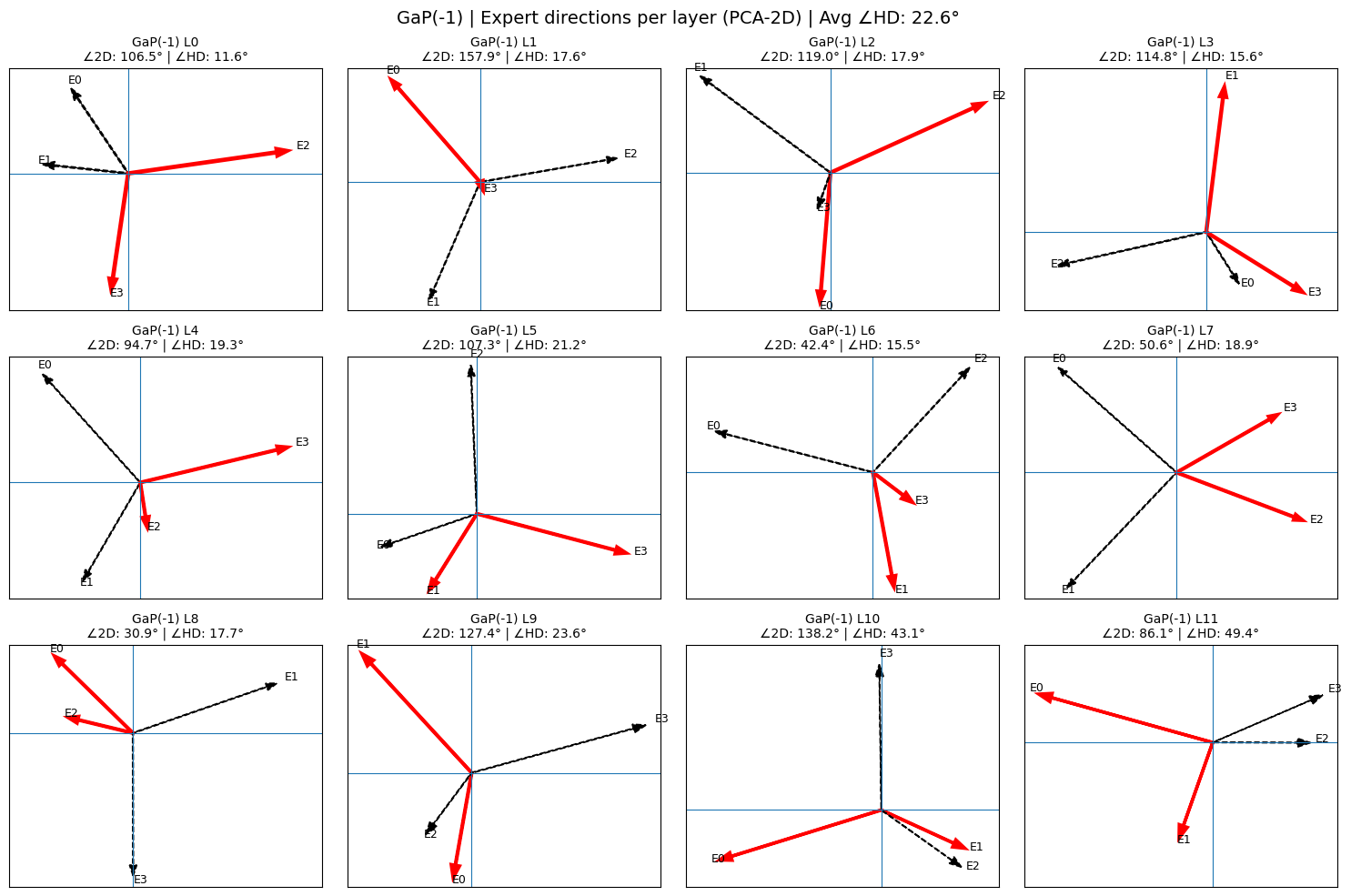}

    \small Angle visualization of GaP-MoE(-1)
\end{center}

\begin{center}
    \includegraphics[width=0.9\linewidth]{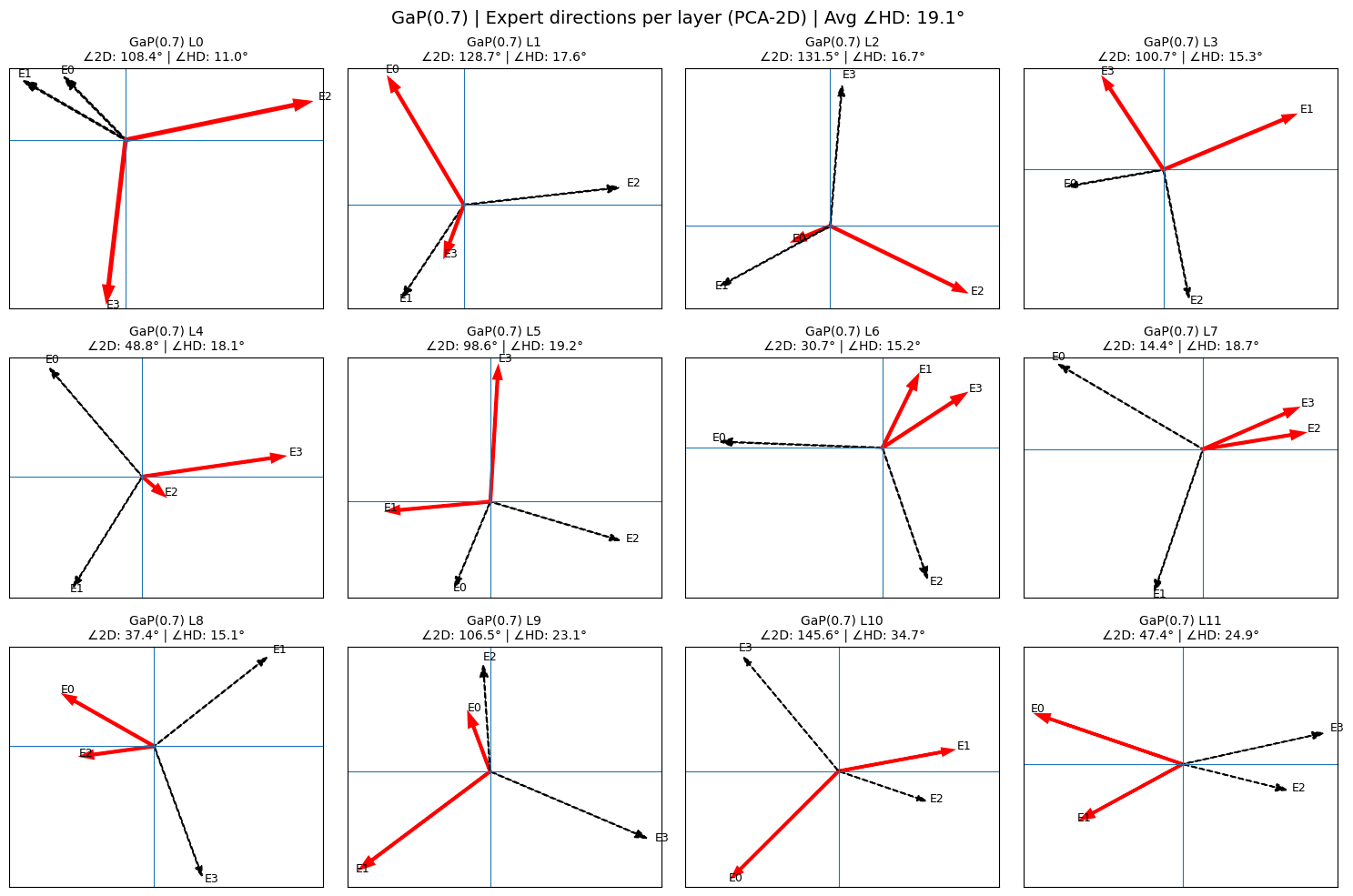}

    \small Angle visualization of GaP-MoE(0.7)
\end{center}

\end{document}